\theoremstyle{plain}
\newtheorem{theorem}{Theorem}[section]
\theoremstyle{definition}
\theoremstyle{remark}
\newcommand{\bfA}{{\bf A}}
\newcommand{\bfD}{{\bf D}}
\newcommand{\bfF}{{\bf F}}
\newcommand{\bfI}{{\bf I}}
\newcommand{\bfJ}{{\bf J}}
\newcommand{\bfK}{{\bf K}}
\newcommand{\bfL}{{\bf L}}
\newcommand{\bfM}{{\bf M}}
\newcommand{\bfV}{{\bf V}}
\newcommand{\bfW}{{\bf W}}
\newcommand{\bfX}{{\bf X}}
\newcommand{\bfZ}{{\bf Z}}
\newcommand{\bfx}{{\bf x}}
\newcommand{\ie}{i.e., }
\definecolor{third}{HTML}{ef8700}  
\definecolor{first}{HTML}{785EF0} 
\definecolor{second}{HTML}{DC267F}  
\newcommand{\one}[1]{\textcolor{third}{\bf#1}}
\newcommand{\two}[1]{\textcolor{first}{\bf#1}}
\newcommand{\three}[1]{{\bf#1}}
\newcommand{\four}[1]{{\bf#1}}
\definecolor{transfercolor}{HTML}{d55e00} 
\definecolor{graphpropcolor}{HTML}{029e73} 
\definecolor{lrbcolor}{HTML}{0173b2} 
\title{On Oversquashing in Graph Neural Networks \\ Through the Lens of Dynamical Systems}
\author{
Alessio Gravina\equalcontrib\textsuperscript{\rm 1}, Moshe Eliasof\equalcontrib\textsuperscript{\rm 2}, Claudio Gallicchio\textsuperscript{\rm 1}, Davide Bacciu\textsuperscript{\rm 1}, Carola-Bibiane Sch\"onlieb\textsuperscript{\rm 2}
}
\begin{document}

\maketitle

\begin{abstract}
A common problem in Message-Passing Neural Networks is oversquashing -- the limited ability to facilitate effective information flow between distant nodes. Oversquashing is attributed to the exponential decay in information transmission as node distances increase. This paper introduces a novel perspective to address oversquashing, leveraging dynamical systems properties of global and local non-dissipativity, that enable the maintenance of a constant information flow rate. We present SWAN, a uniquely parameterized GNN model with antisymmetry both in space and weight domains, as a means to obtain non-dissipativity. Our theoretical analysis asserts that by implementing these properties, SWAN offers an enhanced ability to transmit information over extended distances. Empirical evaluations on synthetic and real-world benchmarks that emphasize long-range interactions validate the theoretical understanding of SWAN, and its ability to mitigate oversquashing.
\end{abstract}

%

\section{Introduction}
\label{sec:intro}
A critical issue that limits Message-Passing Neural Networks (MPNNs) \cite{gilmer2017neural}, a class of GNNs, is the oversquashing problem \cite{alon2021oversquashing, diGiovanniOversquashing}. In the oversquashing scenario, the capacity of MPNNs to transmit information between nodes exponentially decreases as their distance increases, which imposes challenges on modeling long-range interactions, which are often necessary for real-world tasks \cite{dwivedi2022LRGB}.
At the same time, it has been shown in \cite{HaberHolthamRuthotto2017, chen2018neural} that neural networks can be interpreted as the discretization of ordinary differential equations (ODEs).  Building on these observations and understandings, similar concepts were utilized to forge the field of differential-equations inspired GNNs (DE-GNNs), as shown in \cite{poli, chamberlain2021grand} and subsequent works. Through this view, it is possible to design GNNs with strong inductive biases, such as smoothness \cite{chamberlain2021grand}, energy preservation \cite{eliasof2021pde, rusch2022graph}, node-wise non-dissipativity \cite{gravina2022anti}, and more.

In this paper, we are interested in addressing the oversquashing problem in a principled manner, accompanied by theoretical understanding through the prism of DE-GNNs. Current literature offers 
several approaches to mitigate oversquashing, such as adding a virtual global node \cite{gilmer2017neural, cai2023connection}, graph rewiring \cite{gasteiger_diffusion_2019,topping2022understanding,karhadkar2023fosr}, as well as using graph transformers \cite{ dwivedi2021generalization, rampasek2022GPS}. Some of the methods above focus on providing ad-hoc mechanisms for the network to reduce oversquashing. Instead, in this paper, we are interested in theoretically understanding and improving the information propagation capacity of the network, thereby mitigating oversquashing. Specifically, we will show that it is possible to design GNNs with a constant flow of information, unlike typical diffusion-based methods. To this end, we take inspiration from the recent ADGN \cite{gravina2022anti}, a 
non-dissipative GNN. In ADGN, it was shown that by incorporating an antisymmetric transformation to the learned channel-mixing weights, it is possible to obtain a \emph{locally}, i.e., node-wise, non-dissipative behavior. Here, we propose \textbf{SWAN} (\textbf{S}pace-\textbf{W}eight \textbf{AN}tisymmetry), a novel GNN model that is both \emph{globally} (\ie graph-wise) 
and 
\emph{locally} (\ie node-wise) non-dissipative, achieved by space and weight antisymmetric parameterization. To understand the behavior of SWAN and its effectiveness in mitigating oversquashing, we propose a \emph{global}, i.e., graph-wise, analysis, and show that compared to ADGN \cite{gravina2022anti}, our SWAN is both globally and locally 
non-dissipative. The immediate implication of this property is that SWAN is guaranteed to have a constant information flow rate, thereby mitigating oversquashing. Such a property is visualized in Figure~\ref{fig:propagation}, where SWAN shows improved capacities of propagating information across the graph, with respect to diffusion and local non-dissipative approaches. To complement our theoretical analysis, we experiment with several synthetic and real-world long-range benchmarks.

\begin{figure*}[t]
    \centering
  \begin{subfigure}{0.18\textwidth}
    \centering \includegraphics[width=0.75\linewidth]{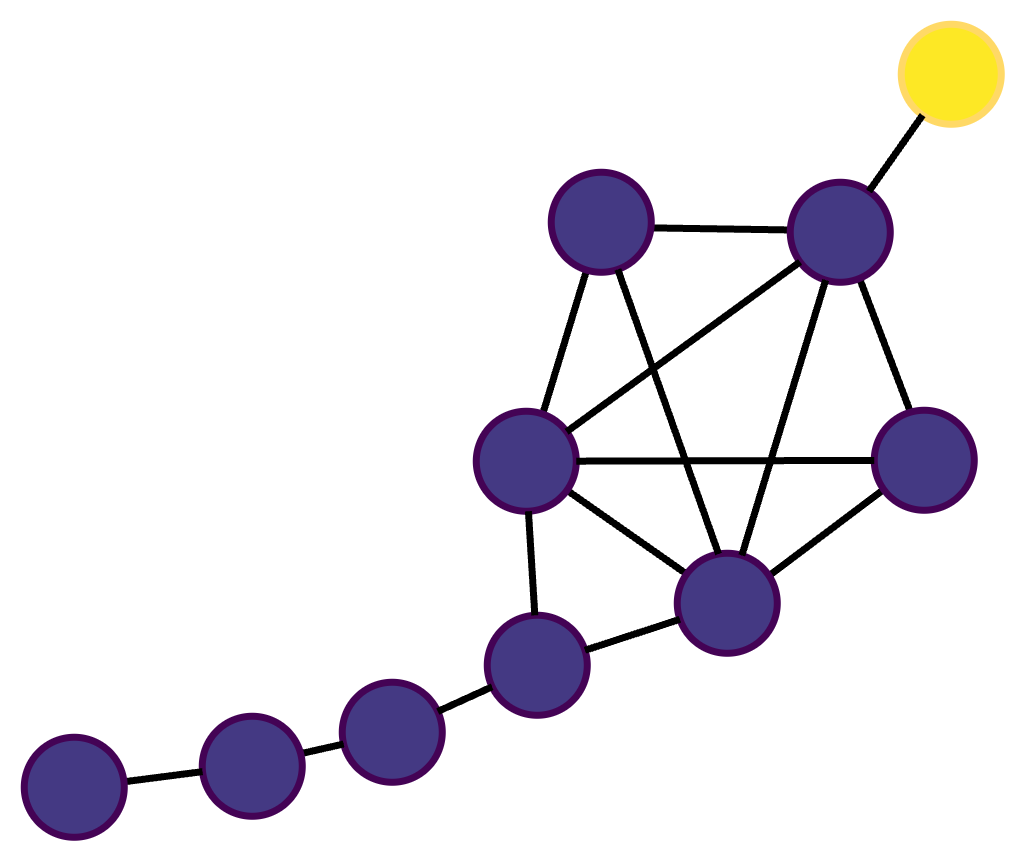}
    \caption{Source\vspace{0.32cm}}
    \label{fig:sub1}
  \end{subfigure}\hspace{0.2cm}
   \begin{subfigure}{0.18\textwidth}
    \centering \includegraphics[width=0.75\linewidth]{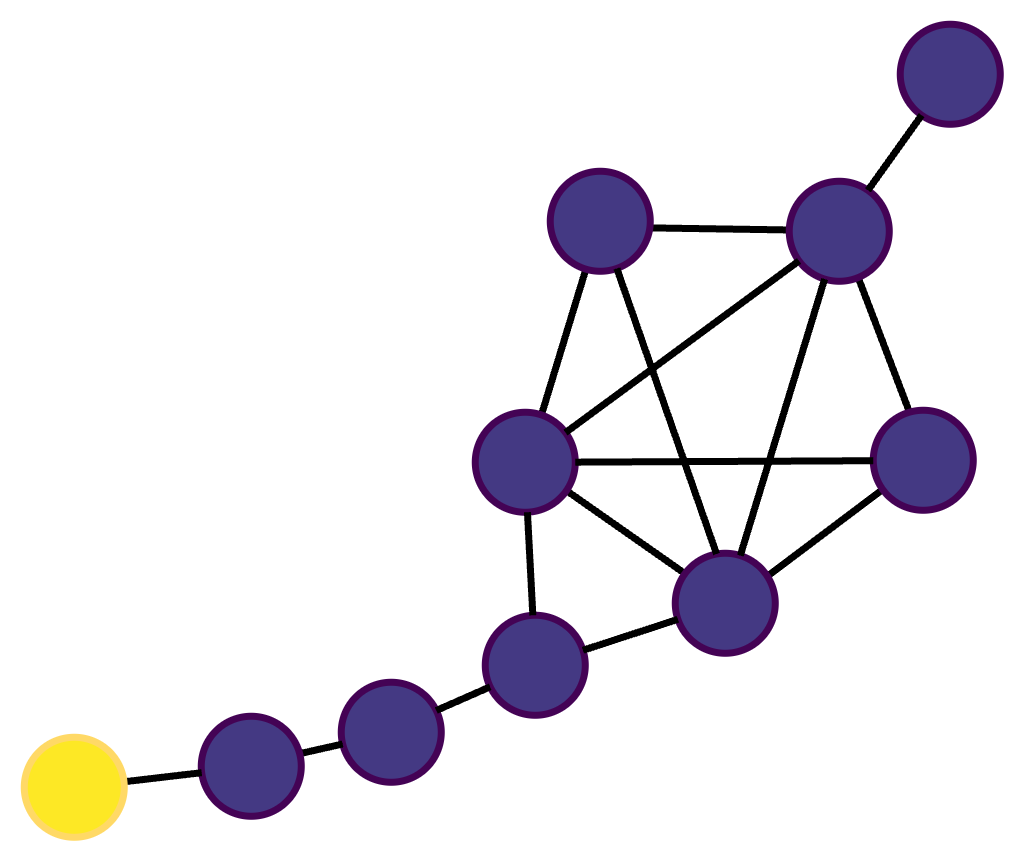}
    \caption{Target\vspace{0.32cm}}
    \label{fig:sub2}
  \end{subfigure}\hspace{0.2cm}
   \begin{subfigure}{0.18\textwidth}
    \centering \includegraphics[width=0.75\linewidth]{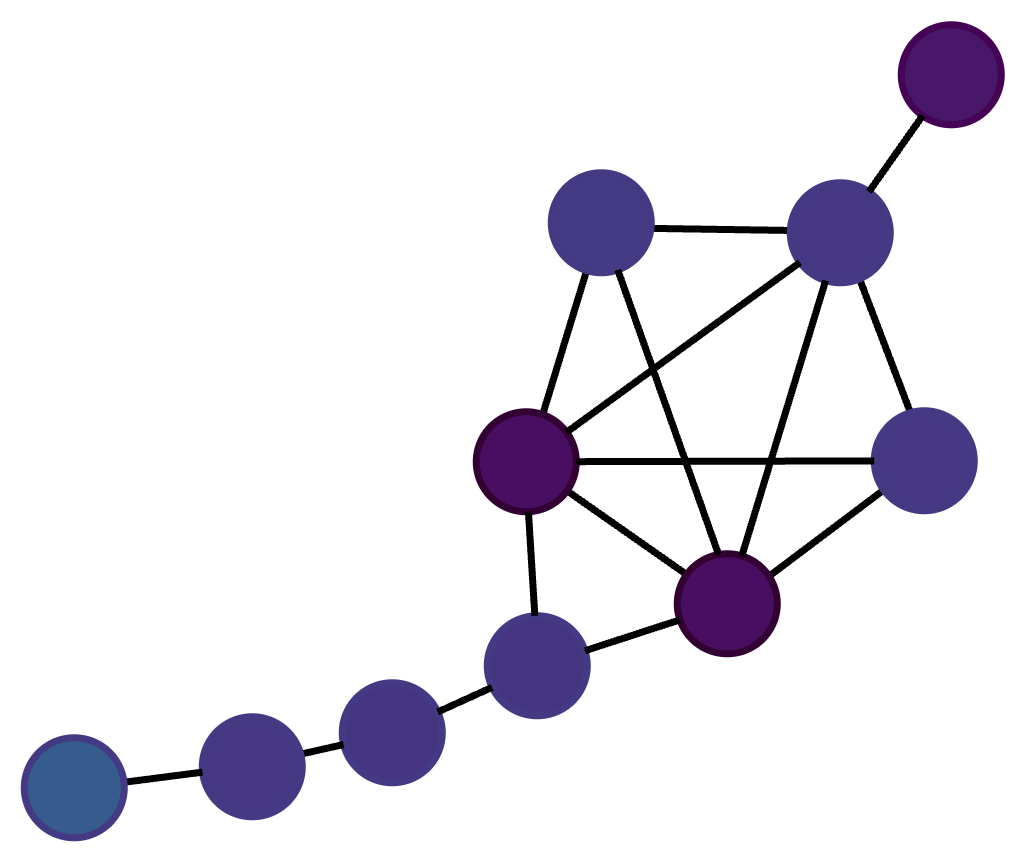}
    \caption{Diffusion\vspace{0.32cm}}
    \label{fig:sub3}
  \end{subfigure}\hspace{0.2cm}
   \begin{subfigure}{0.18\textwidth}
    \centering \includegraphics[width=0.75\linewidth]{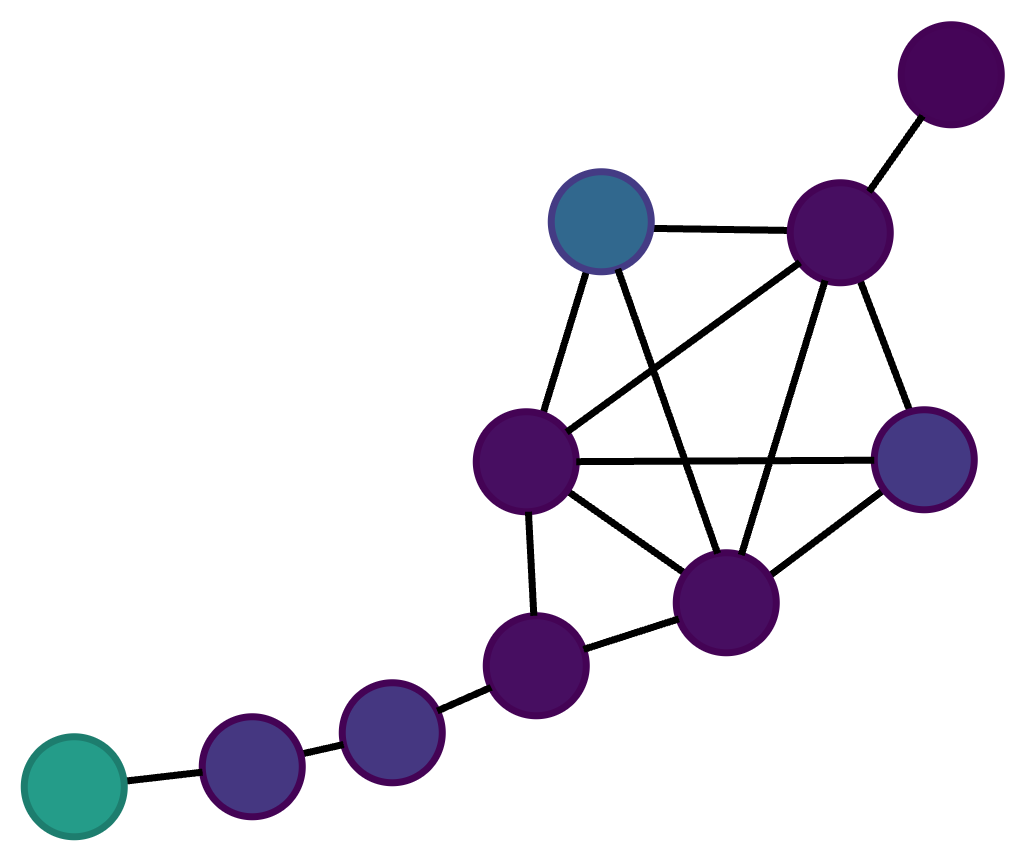}
    \caption{Local \\Non-Dissipativity }
    \label{fig:sub4}
  \end{subfigure}\hspace{0.2cm}
   \begin{subfigure}{0.18\textwidth}
    \centering \includegraphics[width=0.75\linewidth]{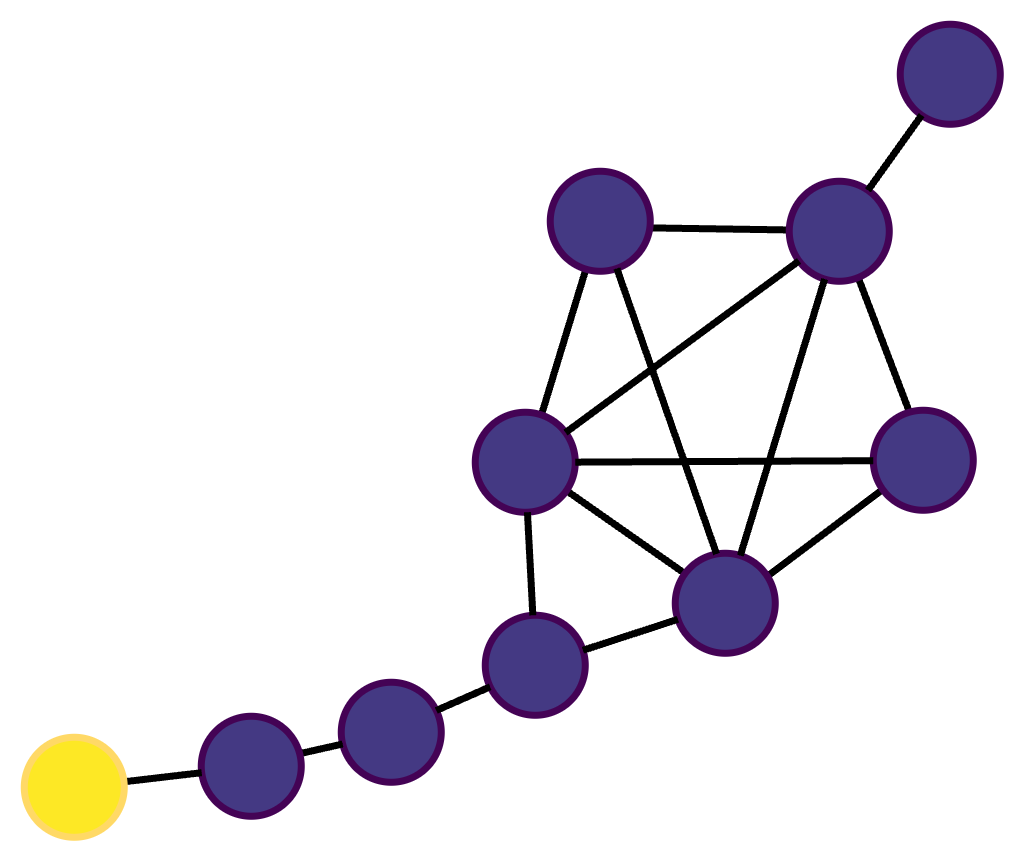}
    \caption{Local and Global Non-Dissipativity}
    \label{fig:sub5}
   \end{subfigure}
  \caption{An illustration of the ability of Global and Local Non-Dissipativity in SWAN (e) to propagate information to distant nodes, from source (a) to the target (b). Other dynamics, such as diffusion (c) cannot achieve this behavior, while Local Non-Dissipativity (d) offers a limited effect.}
    \label{fig:propagation}
\end{figure*}

\noindent \textbf{Main Contributions.} We present the following contributions. (i) A novel graph perspective theoretical analysis of the stability and non-dissipativity of antisymmetric DE-GNNs, providing a general design principle for introducing non-dissipativity as an inductive bias in any DE-GNN model. (ii) We propose SWAN, a space and weight antisymmetric DE-GNN with a constant information flow rate, and an increased distant node interaction sensitivity. (iii) We experimentally verify our theoretical understanding of SWAN, on both synthetic and real-world datasets. 
Our experiments show score improvements of up to 117\% in synthetic datasets, as well as competitive results on real-world benchmarks compared to existing methods, from MPNNs to other DE-GNNs, and graph transformers -- highlighting the importance of global and local non-dissipative behavior offered by SWAN.

\section{Preliminaries}
\label{sec:preliminaries}
We provide an overview of oversquashing problem in GNNs, followed by a brief discussion of DE-GNNs and antisymmetric weights in DE-GNNs. We then provide a mathematical background, which is thoroughly used in this paper to analyze and understand the properties of SWAN.

\subsection{Oversquashing}
\emph{Oversquashing}~\cite{alon2021oversquashing} refers to the shortcoming of a GNN when transferring information between distant nodes, and is a common problem in Message-Passing Neural Networks (MPNNs). 
Because standard MPNNs update node states by aggregating neighborhood information, oversqashing is amplified as node distances increase, hampering the ability of MPNNs to model complex behaviors that require long-range interactions. Namely, to allow a node to receive information from $k$-hops distant nodes, an MPNN must employ at least $k$ layers, otherwise, it will suffer from \emph{under-reaching}, because the two nodes are too far to interact with each other. However, the stacking of multiple message-passing layers also causes each node to receive an exponentially growing amount of information, as multiple hops are considered. This exponential growth of information, combined with the finite and fixed number of channels (features), can lead to loss of information and reduced long-range effectiveness.

\subsection{GNNs Inspired by Differential-Equations}
\label{sec:degnns}
\textbf{Notation.}
We consider a graph ${\mathcal{G}=(\mathcal{V}, \mathcal{E})}$ as a system of interacting entities, referred to as nodes, where $\mathcal{V}$ is a set of
 $n$ nodes, and $\mathcal{E} \subseteq \mathcal{V} \times \mathcal{V}$ is a set of $m$ edges. We define the neighborhood of the $u$-th node as the set of nodes directly attached to it, i.e., $\mathcal{N}_u = \{v | (v,u)\in\mathcal{E} \}$. 
The $u$-th node is associated with a (possibly time-dependent) hidden feature vector 
 $\mathbf{x}_u(t) \in \mathbb{R}^d$ with $d$ features, which provides a representation of the node at time $t$ in the system. The term $\mathbf{X}(t) = [\mathbf{x}_0(t), \ldots, \mathbf{x}_{n-1}(t)]^\top$ is an $n \times d$  matrix that represents the nodes state (features) at time $t$. 
 
\noindent \textbf{DE-GNNs.} The core idea of 
DE-GNNs is to view GNNs as the discretization of the following ODE:
\begin{small}
\begin{equation}
\label{eq:basicODE}
{\frac {\partial \mathbf{x}_u(t)}{\partial t}} = s\left(\{\mathbf{x}_v(t)\}_{v\in\mathcal{N}_u}; \mathcal{G}\right),
\end{equation}
\end{small}
where $\mathbf{x}_u(t=0) = \bfx^{(0)}$ and $s\left(\{\mathbf{x}_v(t)\}_{v\in\mathcal{N}_u}; \mathcal{G}\right)$ is a spatial aggregation function that depends on the graph $\mathcal{G}$ and the node features $\mathbf{x}_u(t)$. Specifically, it is common to implement $s\left(\{\mathbf{x}_v(t)\}_{v\in\mathcal{N}_u}; \mathcal{G}\right)$ with graph diffusion,  combined with a channel mixing operator implemented by a multilayer perceptron (MLP). Some examples of such methods were proposed in \cite{chamberlain2021grand, eliasof2021pde, choi2022gread}, 
and others. 

\noindent \textbf{Antisymmetric Weights in DE-GNNs.}
Learned antisymmetric \emph{weights} were studied in ADGN \cite{gravina2022anti},  summarized below for completeness:
\begin{small}
\begin{equation}
\label{eq:adgnModel}
    {\frac{\partial\mathbf{x}_u(t)}{\partial t}} = \sigma\Bigl((\mathbf{W}-\mathbf{W}^\top)\mathbf{x}_u(t) + \Phi(\{\mathbf{x}_v\}_{v\in\mathcal{N}_u}, \mathbf{V}) \Bigr),
\end{equation}    
\end{small}
where $\bfW, \ \bfV \in \mathbb{R}^{d\times d}$ are learnable weights, $\Phi(\{\mathbf{x}_v\}_{v\in\mathcal{N}_u}, \mathbf{V})$ is any permutation invariant neighborhood aggregation function, and $\sigma$ is an activation function. 
The main theoretical property of ADGN \cite{gravina2022anti} is that it allows a \emph{stable} and \emph{non-dissipative} propagation from a \emph{local}, node-wise perspective. 

\subsection{Mathematical Background}
\label{sec:math}
While \citet{alon2021oversquashing} did not mathematically define oversquashing, recent works \cite{topping2022understanding, diGiovanniOversquashing} associate it with exponentially declining node embedding sensitivity. Building on this perspective, we connect oversquashing to non-dissipative dynamical systems, leading to our SWAN. Specifically, we are interested in studying the \emph{stability} and \emph{non-dissipativity} propagation of information in DE-GNNs. 
Therefore, we follow the analysis techniques presented in  \cite{chang2018antisymmetricrnn, gravina2022anti} and focus on analyzing the sensitivity of an ODE solution with respect to its initial condition, \ie 
\begin{small}
    \begin{equation}
    \label{eq:appendix1}
   \frac{d}{dt}\left(\frac{\partial \mathbf{x}(t)}{\partial \mathbf{x}(0)}\right) = \mathbf{J}(t) \frac{\partial \mathbf{x}(t)}{\partial\mathbf{x}(0)}.
\end{equation}
\end{small}

We now present an overview of the various outcomes that can arise from this analysis. We follow the results and assumptions from \cite{chang2018antisymmetricrnn, gravina2022anti}, and consider the Jacobian, $\mathbf{J}(t)$ in \cref{eq:appendix1} to not change significantly over time. We assume this condition in \Cref{thm:swanConstantRate,thm:diffusionExpDecay}. In Appendix I, we provide a discussion of the justification as well as numerical verification of our assumption. Given that, 
we can apply results from autonomous differential equations \cite{AscherPetzoldODEs} and solve \cref{eq:appendix1} analytically as follows:
\begin{small}
    \begin{equation}
   \frac{\partial \mathbf{x}(t)}{\partial \mathbf{x}(0)} = e^{t \mathbf{J}} = \mathbf{T} e^{t \mathbf{\Lambda}}\mathbf{T}^{-1}
   = \mathbf{T} 
   \big(\sum_{k=0}^\infty \frac{(t \mathbf{\Lambda})^k}{k!}\big)
   \mathbf{T}^{-1},
\end{equation}
\end{small}
where $\mathbf{\Lambda}$ is the diagonal matrix whose non-zero entries contain the eigenvalues of $\mathbf{J}$ (\ie $\lambda_i$), and $\mathbf{T}$ has the eigenvectors of $\mathbf{J}$ as columns. We observe that the qualitative behavior of $\partial \mathbf{x}(t)/\partial \mathbf{x}(0)$ is determined by the real parts of the eigenvalues of $\mathbf{J}$, leading to three different behaviors: (i) instability, (ii) dissipativity (i.e., information loss), (iii) non-dissipativity (i.e., information preservation). 

\textbf{Instability} is observed when $\max_{i=1,...,d} Re(\lambda_i(\mathbf{J})) > 0$, i.e., a small perturbation of the initial condition would cause an exponential divergence in node representations. 

\textbf{Dissipativity} occurs when $\max_{i=1,...,d} Re(\lambda_i(\mathbf{J})) < 0$. Only local neighborhood information is preserved by the system since the term $\partial \mathbf{x}(t)/\partial \mathbf{x}(0)$ would vanish exponentially fast over time, thereby making the nodes' representation insensitive to differences in the input graph.

\textbf{Non-dissipativity} is obtained when $Re(\lambda_i(\mathbf{J}))= 0$. Here, the system is stable, and the magnitude of $\partial \mathbf{x}(t)/\partial \mathbf{x}(0)$ is constant over time, and the input graph information is effectively propagated through the successive transformations into the final nodes' representations, addressing oversquashing.

\section{SWAN: Space-Weight Antisymmetric GNN}
\label{sec:method}
We now turn to present \textbf{SWAN}, \textbf{s}pace and \textbf{w}eight  \textbf{an}tisymmetric GNN. We analyze its theoretical behavior and show that it is both \emph{global} (\ie graph-wise) and \emph{local} (\ie node-wise) \emph{non-dissipative}. As a consequence, one of the key features of SWAN is that it has a constant \emph{global} information flow rate, unlike common diffusion GNNs. Therefore, SWAN should theoretically be able to propagate information between any nodes with a viable path in the graph, allowing to mitigate oversquashing.  
Figure~\ref{fig:multiple_behaviors}  exemplifies the differences between dissipative, local non-dissipative, and global and local non-dissipative systems. 

\begin{figure*}[h]
\centering
\begin{subfigure}{0.33\textwidth}
    \includegraphics[width=\textwidth]{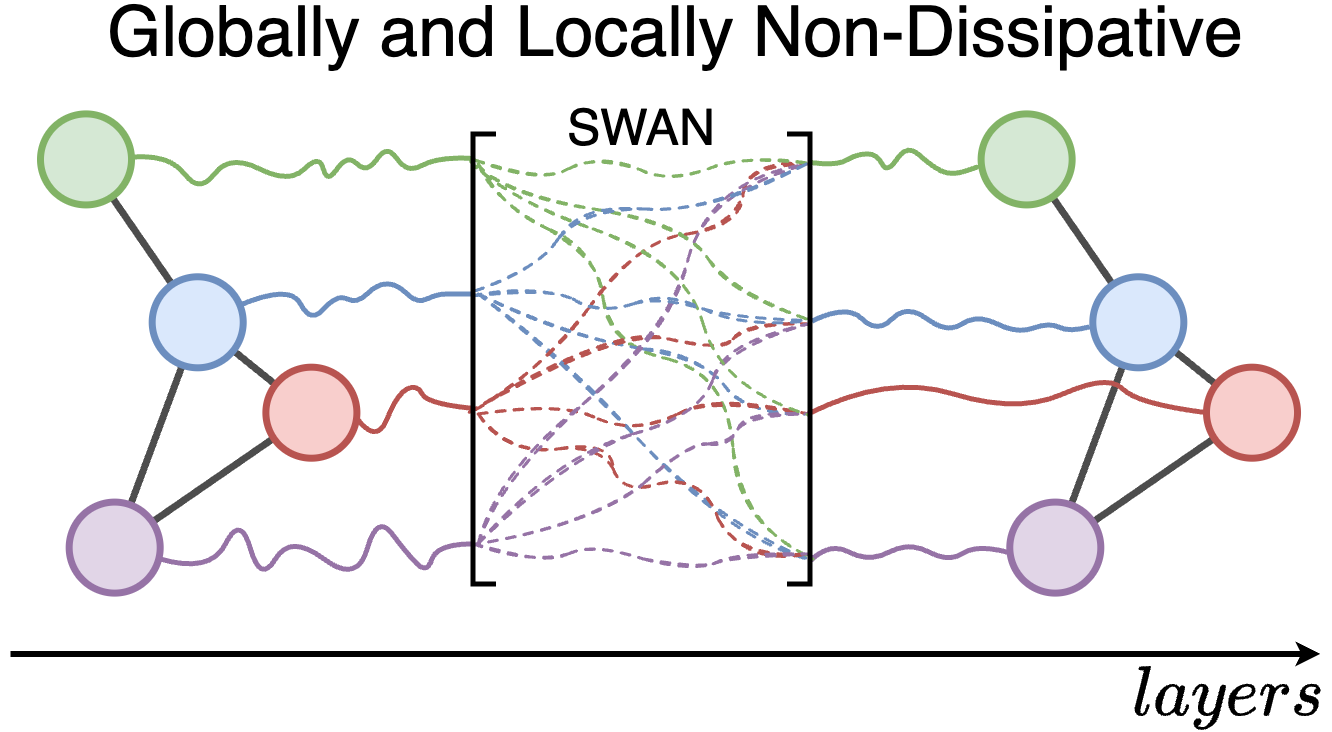}
    \caption{}
        \label{fig:globalNonDis}
\end{subfigure}
\begin{subfigure}{0.33\textwidth}
    \includegraphics[width=\textwidth]{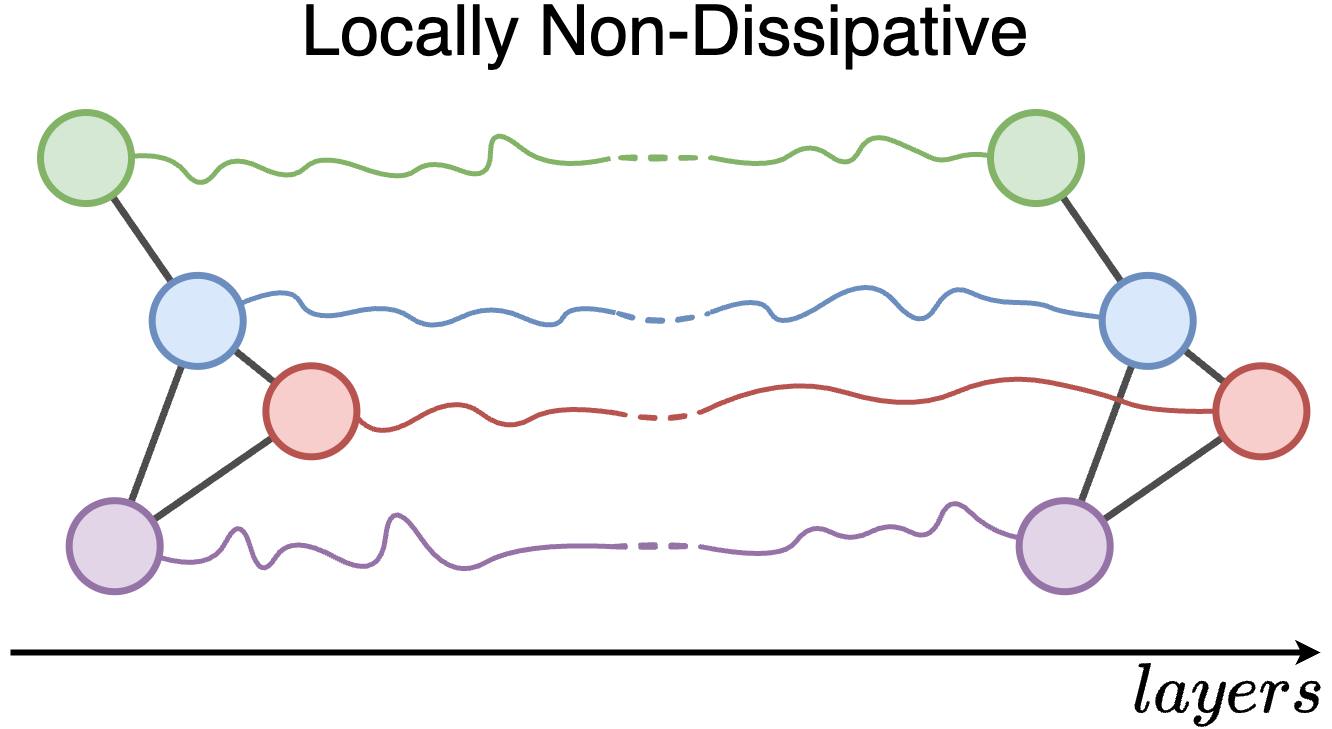}
    \caption{}
    \label{fig:localNonDis}
\end{subfigure}
\begin{subfigure}{0.33\textwidth}
    \includegraphics[width=\textwidth]{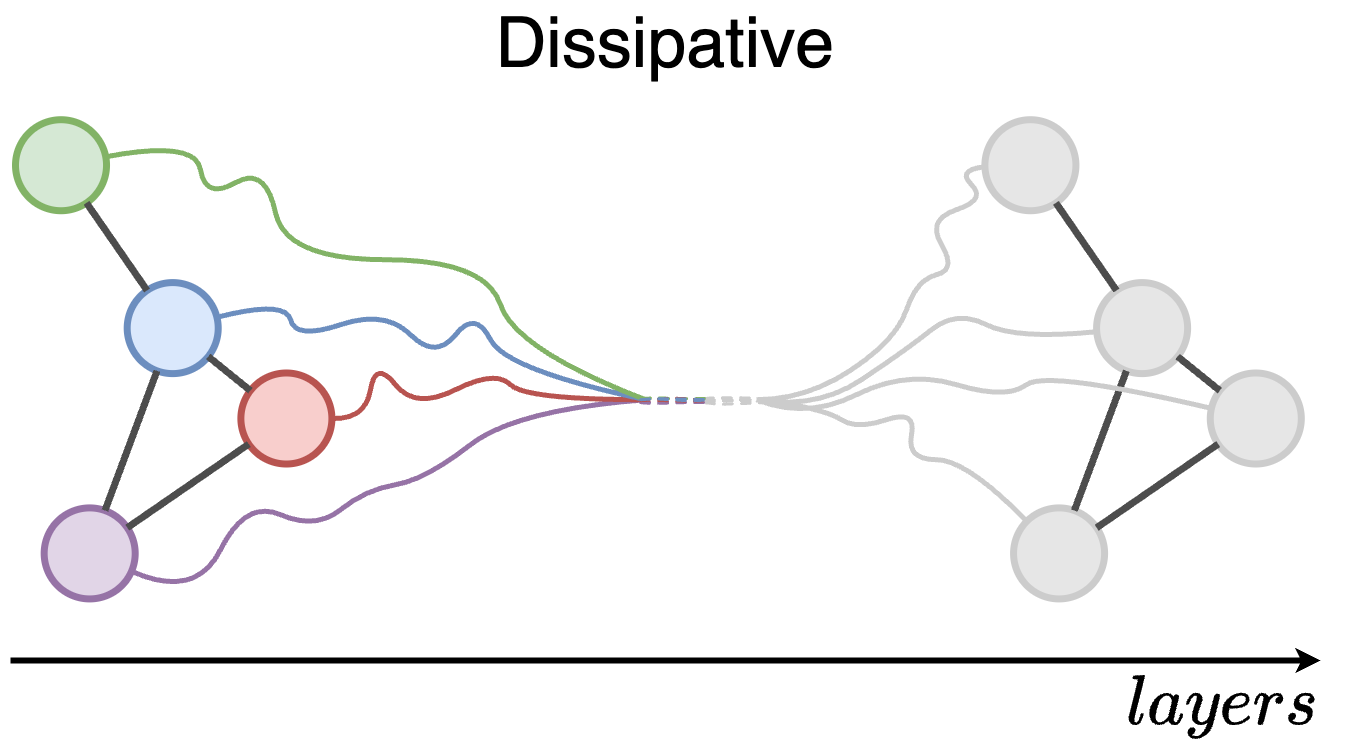}
    \caption{}
        \label{fig:dissipative}

\end{subfigure}
    \caption{The difference between non-dissipative and dissipative behaviors. With global (\ie graph-wise) and local (\ie node-wise) 
    non-dissipative behavior (a), information is propagated between any pair of nodes with a viable path in the graph. Therefore, such a behavior increases the long-range effectiveness of the model. A model exhibiting local 
    non-dissipative behavior (b) enhances only the long-term memory capacity of individual nodes. A model demonstrating dissipative behavior (c) exhibits a convergence of node features toward non-informative values.}
    \label{fig:multiple_behaviors}
\end{figure*}

\noindent \textbf{Space and Weight Antisymmetry.} We define SWAN by including a new term that introduces antisymmetry in the aggregation function, i.e.,
\begin{small}
 \begin{align} \label{eq:new_adgn_nodewise}
     \frac{\partial\mathbf{x}_u(t)}{\partial t} =  \sigma\Bigl(&(\mathbf{W}-\mathbf{W}^\top)\mathbf{x}_u(t) \nonumber
     + \Phi(\{\mathbf{x}_j\}_{j\in\mathcal{N}_u}, \mathbf{V}) \nonumber\\  
     &+ \beta\Psi(\{\mathbf{x}_j\}_{j\in\mathcal{N}_u}, \mathbf{Z})\Bigr).
 \end{align}
\end{small}
where $\mathbf{W}$, $\mathbf{V}$, and $\mathbf{Z}$ are learnable weight matrices. $\Phi$ and $\Psi$ are permutation invariant neighborhood aggregation functions, where $\Psi$ performs antisymmetric aggregation. 
While $\Psi$ can assume various forms of  antisymmetric aggregation functions with imaginary eigenvalues, and $\Phi$ can be any aggregation function, in this paper we explore the following family of parameterizations:
\begin{small}
\begin{align}
\Phi &= \sum_{v\in\mathcal{N}_u}(\hat{\bfA}_{uv}+\hat{\bfA}_{vu})(\mathbf{V}-\bfV^\top)\mathbf{x}_v(t), \label{eq:aggFunc1}\\
\Psi &= \sum_{v\in\mathcal{N}_u}(\tilde{\bfA}_{uv}-\tilde{\bfA}_{vu})(\mathbf{Z}+\mathbf{Z}^\top)\mathbf{x}_v(t), \label{eq:aggFunc2}
\end{align}    
\end{small}
where $\tilde{\mathbf{A}},\hat{\mathbf{A}} \in \mathbb{R}^{n \times n}$ are neighborhood aggregation matrices that can be pre-defined or learned.  In our experiments, we consider two instances of \cref{eq:aggFunc1,eq:aggFunc2}. The first, where $\tilde{\mathbf{A}},\hat{\mathbf{A}}$ are pre-defined by the random walk and symmetrically normalized adjacency matrices, respectively. The second, learns $\tilde{\mathbf{A}},\hat{\mathbf{A}}$, as described in Appendix A. Notably, as shown below, the 
parametrization of $\Psi$ and $\Phi$, described in \cref{eq:aggFunc1,eq:aggFunc2}, 
ensures that SWAN’s node- and graph-wise Jacobians have purely imaginary eigenvalues (\cref{sec:nodewise,sec:graphwise}), which, together with the background in \cref{sec:math}, shows SWAN's ability to be non-dissipative both locally and globally, 
leading to a \emph{globally} constant information flow 
regardless of time, 
\ie the model's depth. 
Lastly, we note that the general formulation of $\Phi$ and $\Psi$ provide a general design principle for introducing non-dissipativity as an inductive bias in any DE-GNN (see Appendix A.4).

\subsection{Node-wise Analysis of SWAN} 
\label{sec:nodewise}
We reformulate 
\cref{eq:new_adgn_nodewise} to consider 
the formulation of $\Phi$ and $\Psi$ as in \cref{eq:aggFunc1,eq:aggFunc2}, reading:
\begin{small}
 \begin{align}\label{eq:new_adgn_node} \nonumber
     \frac{\partial\mathbf{x}_u(t)}{\partial t} =  \sigma&\Bigl((\mathbf{W}-\mathbf{W}^\top)\mathbf{x}_u(t) \nonumber\\ 
     &+ \sum_{v\in\mathcal{N}_u}(\hat{\bfA}_{uv}+\hat{\bfA}_{vu})(\mathbf{V}-\bfV^\top)\mathbf{x}_v(t) \nonumber\\  
     &+ \beta\sum_{v\in\mathcal{N}_u}((\tilde{\bfA}_{uv}-\tilde{\bfA}_{vu})(\mathbf{Z}+\mathbf{Z}^\top)\mathbf{x}_v(t)\Bigr).
 \end{align}
\end{small}
\textbf{SWAN is locally non-dissipative.} Using the sensitivity analysis in Section~\ref{sec:math}, we show that SWAN is stable and non-dissipative from a node perspective, \ie it is \emph{locally} non-dissipative. 
In this case, the Jacobian $\mathbf{J}(t) = \mathbf{M}_1\mathbf{M}_2$ of \cref{eq:new_adgn_node} is composed of :
\begin{small}
 \begin{align}
 \nonumber \mathbf{M}_1 &= \mathrm{diag}\Bigl[\sigma'\Bigl((\mathbf{W}-\mathbf{W}^\top)\mathbf{x}_u(t) \\ \nonumber &\hspace{1cm}+ \sum_{v\in\mathcal{N}_u}(\hat{\bfA}_{uv}+\hat{\bfA}_{vu})(\mathbf{V}-\bfV^\top)\mathbf{x}_v(t) \\  &\hspace{1cm}+ \beta\sum_{v\in\mathcal{N}_u}(\tilde{\bfA}_{uv}-\tilde{\bfA}_{vu})(\mathbf{Z}+\mathbf{Z}^\top)\mathbf{x}_v(t)\Bigl)\Bigr],\\
 \mathbf{M}_2 &= (\mathbf{W}-\mathbf{W}^\top) + (\hat{\bfA}_{uv}+\hat{\bfA}_{vu})(\mathbf{V}-\bfV^\top).
 \end{align}
\end{small}
Following results from \cite{chang2018antisymmetricrnn,gravina2022anti}, only the eigenvalues of $\mathbf{M}_2$ determine the \emph{local} 
stability and non-dissipativity of the system in \cref{eq:new_adgn_nodewise} for the final behavior of the model. Specifically, if the real part of all the eigenvalues of $\bfM_2$ is zero, then stability and non-dissipativity are achieved. We note that this is indeed the case in our system, since the real part of the eigenvalues of antisymmetric matrices is zero, and $\bfM_2$ is composed of a summation of two antisymmetric matrices.

\subsection{Graph-wise Analysis of SWAN}
\label{sec:graphwise}
While the node-perspective analysis is important because it shows the long-term memory capacity of individual nodes, as illustrated in Figure~\ref{fig:localNonDis}, it overlooks  \emph{pairwise} node interactions, which are described by the properties of \cref{eq:new_adgn_nodewise} with respect to the graph, illustrated in Figure \ref{fig:globalNonDis}.
As we show below, our SWAN is globally non-dissipative. Hence, it allows the constant rate of information flow and node interactions, independently of time $t$, i.e., the network's depth.  Therefore, we deem that SWAN's non-dissipativity behavior is beneficial in addressing oversquashing in MPNNs.

\noindent  \textbf{SWAN is globally non-dissipative.} We start by reformulating \cref{eq:new_adgn_node} from a node-wise formulation to a graph-perspective formulation, as follows:
\begin{small}
\begin{align}\label{eq:new_adgn_graphwise}
    \nonumber \frac{\partial\mathbf{X}(t)}{\partial t} = \sigma \Bigl(\mathbf{X}(t)&{(\mathbf{W}-\mathbf{W}^\top)} 
    +{(\hat{\mathbf{A}}+\hat{\mathbf{A}}^\top)\mathbf{X}(t)(\mathbf{V}-\bfV^\top)} \\  &
    + \beta{(\tilde{\mathbf{{A}}}-\tilde{\mathbf{{A}}}^\top)\mathbf{X}(t)(\mathbf{Z}+\mathbf{Z}^\top)}\Bigr).
\end{align}
\end{small}
%
Following the sensitivity analysis introduced in Section~\ref{sec:math}, and applying
the vectorization operator, 
the Jacobian of \cref{eq:new_adgn_graphwise}, $\mathbf{J}(t) = \mathbf{M}_1\mathbf{M}_2$, writes as the multiplication of:
\begin{small}
 \begin{align}
 \label{eq:J_SWAN} \nonumber
 \nonumber \mathbf{M}_1 &= \mathrm{diag}\Bigl[\mathrm{vec}\Bigl(\sigma'\Bigl(\mathbf{I}\mathbf{X} (\mathbf{W}-\mathbf{W}^\top) \\ \nonumber&\hspace{1cm}+ (\hat{\mathbf{A}}+\hat{\mathbf{A}}^\top)\mathbf{X}(t)(\mathbf{V}-\bfV^\top) \\ &\hspace{1cm}+\beta(\tilde{\mathbf{{A}}}-\tilde{\mathbf{{A}}}^\top)\mathbf{X}(t)(\mathbf{Z}+\mathbf{Z}^\top) \Bigr)\Bigr) \Bigr] \\
 \label{eq:graphwiseM2}
 \nonumber \mathbf{M}_2 &= (\mathbf{W}-\mathbf{W}^\top)^\top \otimes \bfI \\ \nonumber &\hspace{1cm} + (\mathbf{V}-\bfV^\top)^\top\otimes (\hat{\mathbf{A}}+\hat{\mathbf{A}}^\top) \\  &\hspace{1cm} + \beta(\mathbf{Z}+\mathbf{Z}^\top)^\top \otimes (\tilde{\mathbf{{A}}}-\tilde{\mathbf{{A}}}^\top),
     \end{align}
\end{small}
where $\bfI \in \mathbb{R}^{n \times n}$ is the identity matrix, $\rm{vec}$ is the vectorization operator, and $\otimes$ is the Kronecker product (see Appendix H for more details).
As in our node-wise analysis in Section \ref{sec:nodewise}, $\mathbf{M}_1$ in \cref{eq:J_SWAN} is a diagonal matrix. Thus, stability and non-dissipativity demand that $\bfM_2$ from \cref{eq:graphwiseM2} has eigenvalues with real part equal to zero. We see that $\bfM_2$ satisfies this condition as it is composed of a summation of three antisymmetric matrices whose eigenvalues have a real part of zero. We conclude that SWAN (\cref{eq:new_adgn_graphwise}) 
is stable, and globally and locally non-dissipative. 
We now show that the properties of 
stability and global non-dissipativity allow the design of GNNs that can mitigate oversquashing.

\begin{theorem}[SWAN has a constant global  information propagation rate]
\label{thm:swanConstantRate} The information propagation rate among the graph nodes $\cal{V}$ is constant, $c$, independently of time $t$:
\begin{small}
\begin{equation}\label{eq:swanConstant}   
\left\Vert\frac{\partial \rm{vec}(\mathbf{X}(t))}{\partial \rm{vec}(\mathbf{X}(0))}\right\Vert = c,
\end{equation}
\end{small}
\end{theorem}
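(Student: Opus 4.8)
The plan is to combine the sensitivity analysis of \Cref{sec:math} with the antisymmetric structure of the graph-wise Jacobian established in \Cref{sec:graphwise}. First I would set $\mathbf{S}(t) := \partial\,\mathrm{vec}(\mathbf{X}(t))/\partial\,\mathrm{vec}(\mathbf{X}(0))$ and note that, by \cref{eq:appendix1} applied to $\mathrm{vec}(\mathbf{X}(t))$, it solves the variational equation $\dot{\mathbf{S}}(t) = \mathbf{J}(t)\mathbf{S}(t)$ with $\mathbf{S}(0) = \mathbf{I}$. Invoking the standing assumption (the one discussed in Appendix~I) that $\mathbf{J}(t)\equiv\mathbf{J}$ does not vary appreciably in time, this integrates to $\mathbf{S}(t) = e^{t\mathbf{J}}$, so the claim reduces to showing that $\|e^{t\mathbf{J}}\|$ is independent of $t$.

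Next I would use the decomposition $\mathbf{J} = \mathbf{M}_1\mathbf{M}_2$ from \cref{eq:J_SWAN,eq:graphwiseM2}: the factor $\mathbf{M}_1$ is diagonal, and $\mathbf{M}_2$ is a sum of three Kronecker products, each pairing an antisymmetric matrix with a symmetric one. Since $(\mathbf{A}\otimes\mathbf{B})^\top = \mathbf{A}^\top\otimes\mathbf{B}^\top$ and an antisymmetric-times-symmetric Kronecker factor is antisymmetric, a one-line check gives $\mathbf{M}_2^\top = -\mathbf{M}_2$. Following the reduction used in \Cref{sec:math,sec:graphwise} (after \cite{chang2018antisymmetricrnn,gravina2022anti}), the stability/non-dissipativity behaviour is governed by $\mathbf{M}_2$, so I would take $\mathbf{J} = \mathbf{M}_2$ antisymmetric. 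Then $\mathbf{J}$ is normal and commutes with $\mathbf{J}^\top$, so
\[
(e^{t\mathbf{J}})^\top e^{t\mathbf{J}} = e^{t\mathbf{J}^\top}e^{t\mathbf{J}} = e^{-t\mathbf{J}}e^{t\mathbf{J}} = \mathbf{I},
\]
i.e. $e^{t\mathbf{J}}$ is orthogonal for every $t$; equivalently, diagonalizing $\mathbf{J} = \mathbf{T}\mathbf{\Lambda}\mathbf{T}^{-1}$ with $\mathbf{T}$ unitary and $\mathbf{\Lambda}$ purely imaginary gives $\|e^{t\mathbf{J}}\| = \|e^{t\mathbf{\Lambda}}\| = \max_i|e^{t\lambda_i}| = 1$. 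Hence $\|\mathbf{S}(t)\| = 1 =: c$ for all $t$ in the spectral norm (and $\sqrt{nd}$ in the Frobenius norm), which is the assertion.

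The step I expect to be the main obstacle is controlling the diagonal factor $\mathbf{M}_1$: with a generic activation $\sigma$, $\mathbf{J} = \mathbf{M}_1\mathbf{M}_2$ is only \emph{similar} to an antisymmetric matrix (via conjugation by $\mathbf{M}_1^{1/2}$ when $\mathbf{M}_1\succ 0$), which keeps all eigenvalues of $\mathbf{J}$ purely imaginary but makes $\|e^{t\mathbf{J}}\|$ merely bounded by $\kappa(\mathbf{M}_1^{1/2})$ rather than exactly constant. The clean equality therefore rests on the same modelling convention adopted in \cite{chang2018antisymmetricrnn,gravina2022anti} of reading off the behaviour from $\mathbf{M}_2$ alone (equivalently, treating the activation derivative at the operating point as $\mathbf{M}_1=\mathbf{I}$), and I would state this dependence explicitly. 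Everything else — the variational equation, the exponential solution, and the orthogonality of $e^{t\mathbf{J}}$ — is routine once antisymmetry is in place.
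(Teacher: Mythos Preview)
Your proposal is correct and follows essentially the same route as the paper: set up the variational equation for $\partial\,\mathrm{vec}(\mathbf{X}(t))/\partial\,\mathrm{vec}(\mathbf{X}(0))$, invoke the time-invariant Jacobian assumption to obtain $e^{t\mathbf{J}}$, and then conclude constancy from the purely imaginary spectrum of $\mathbf{J}=\mathbf{M}_1\mathbf{M}_2$. Your treatment is in fact more careful than the paper's---you make the orthogonality of $e^{t\mathbf{M}_2}$ explicit and honestly flag that the exact equality hinges on the convention of reading the dynamics from $\mathbf{M}_2$ alone, whereas the paper simply asserts that imaginary eigenvalues yield constant magnitude without addressing the non-unitarity of $\mathbf{T}$ when $\mathbf{M}_1\neq\mathbf{I}$.
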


\begin{proof}
Let us consider the following equation:
\begin{small}
\begin{equation}    \label{eq:proof_graph_wise}   
\frac{d}{dt}\left(\frac{\partial \mathbf{X}(t)}{\partial \mathbf{X}(0)}\right) = \frac{d}{dt}\left(\frac{\partial \rm{vec}(\mathbf{X}(t))}{\partial \rm{vec}(\mathbf{X}(0))}\right) = \mathbf{J}(t) \frac{\partial \rm{vec}(\mathbf{X}(t))}{\partial\rm{vec}(\mathbf{X}(0))}. 
\end{equation}
\end{small}
We follow the assumption in \cite{chang2018antisymmetricrnn,gravina2022anti} that the Jacobian, $\mathbf{J}(t)$, does not change significantly over time, 
then we can apply results from autonomous differential equations and solve \cref{eq:proof_graph_wise}: 
\begin{small}
    \begin{equation}
   \frac{\partial \rm{vec}(\mathbf{X}(t))}{\partial\rm{vec}(\mathbf{X}(0))} = e^{t \mathbf{J}} = \mathbf{T} e^{t \mathbf{\Lambda}}\mathbf{T}^{-1} = \mathbf{T} 
   \big(\sum_{k=0}^\infty \frac{(t \mathbf{\Lambda})^k}{k!}\big)
   \mathbf{T}^{-1},
\end{equation}
\end{small}
where $\mathbf{\Lambda}$ is the diagonal matrix whose non-zero entries contain the eigenvalues of $\mathbf{J}$, and $\mathbf{T}$ has the eigenvectors of $\mathbf{J}$ as columns, as in Section~\ref{sec:math}. 
As previously shown, 
it holds that 
 $Re(\lambda_i(\mathbf{J}(t))) 
= 0$ for $i=1,...,d$, since the Jacobian is the result of the multiplication between a diagonal matrix and an antisymmetric matrix. Thus, the magnitude of $\partial \mathbf{X}(t)/\partial \mathbf{X}(0)$ is constant over time, allowing input features to propagate through the layers into the final node features. 
\end{proof}

\Cref{thm:swanConstantRate} states that regardless of time $t$ (equivalent to $\ell = t/\epsilon$ layers of SWAN, where $\epsilon$ is the step size), the information between nodes continues to propagate at a constant rate, unlike diffusion  GNNs that exhibit an exponential decay in the propagation rate with respect to time, as shown below.

\begin{theorem}[Time Decaying Propagation in Diffusion GNNs] \label{thm:diffusionExpDecay} A diffusion GNN with Jacobian eigenvalues with magnitude $\bfK_{ii}= |\mathbf{\Lambda}_{ii}| \ , \ i\in\{0,\ldots,n-1\}$ has an exponentially decaying information propagation rate, as follows:
\begin{small}
\begin{equation}    
    \label{eq:diffusionDecay}   
    \left\Vert\frac{\partial \rm{vec}(\mathbf{X}(t))}{\partial \rm{vec}(\mathbf{X}(0))}\right\Vert = \|e^{-t\mathbf{K}}\|, \end{equation}
\end{small}
\end{theorem}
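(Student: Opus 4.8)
The plan is to mirror the structure of the proof of \Cref{thm:swanConstantRate}, but now for a diffusion-type GNN whose defining feature is that its Jacobian has eigenvalues with \emph{negative} real part (the dissipative regime described in \Cref{sec:math}). First I would fix the setup: a diffusion GNN is governed by an ODE of the form $\partial \mathbf{X}(t)/\partial t = \sigma(\ldots)$ where the linearized dynamics are driven by a (negative semi-definite) graph Laplacian-like operator, so that the associated Jacobian $\mathbf{J}$ is diagonalizable as $\mathbf{J} = \mathbf{T}(-\mathbf{K})\mathbf{T}^{-1}$ with $\mathbf{K} = \mathrm{diag}(\bfK_{ii})$ and $\bfK_{ii} = |\mathbf{\Lambda}_{ii}| > 0$ the magnitudes of the (real, nonpositive) eigenvalues. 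Under the same ``slowly varying Jacobian'' assumption already invoked in \Cref{thm:swanConstantRate} and stated in \Cref{sec:math}, the sensitivity equation \cref{eq:appendix1,eq:proof_graph_wise} can again be solved by the matrix exponential.

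The key computation is then short: solving $\frac{d}{dt}\!\left(\frac{\partial \rm{vec}(\mathbf{X}(t))}{\partial \rm{vec}(\mathbf{X}(0))}\right) = \mathbf{J}\,\frac{\partial \rm{vec}(\mathbf{X}(t))}{\partial \rm{vec}(\mathbf{X}(0))}$ gives $\frac{\partial \rm{vec}(\mathbf{X}(t))}{\partial \rm{vec}(\mathbf{X}(0))} = e^{t\mathbf{J}} = \mathbf{T} e^{-t\mathbf{K}} \mathbf{T}^{-1}$. Taking norms and absorbing the (time-independent) conditioning constants of $\mathbf{T}$ into the spectral quantity of interest yields $\left\Vert \frac{\partial \rm{vec}(\mathbf{X}(t))}{\partial \rm{vec}(\mathbf{X}(0))} \right\Vert = \|e^{-t\mathbf{K}}\|$, which is exactly \cref{eq:diffusionDecay}. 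Since $\mathbf{K}$ is diagonal with strictly positive entries, $\|e^{-t\mathbf{K}}\| = \max_i e^{-t\bfK_{ii}} = e^{-t\min_i \bfK_{ii}}$ decays exponentially in $t$, establishing the ``exponentially decaying propagation rate'' claim; I would state this decay rate explicitly as a concluding remark to contrast with the constant $c$ of \Cref{thm:swanConstantRate}.

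The main obstacle is not the algebra but pinning down precisely \emph{which} class of ``diffusion GNNs'' makes the hypothesis ``Jacobian eigenvalues have magnitude $\bfK_{ii}$ with negative real part'' legitimate — i.e. justifying that heat-diffusion-style message passing (\eg GRAND-type models as in \cite{chamberlain2021grand}) indeed linearizes to an operator with real nonpositive spectrum. I would handle this by either (i) taking it as the defining assumption of the theorem (the statement already supplies $\bfK$ as given data), citing the dissipativity discussion in \Cref{sec:math}, or (ii) briefly noting that for the canonical choice $s(\cdot) = -\Delta \mathbf{X}$ with $\Delta$ a symmetric normalized graph Laplacian, the eigenvalues lie in $[0,2]$ and the Jacobian factor $\mathbf{M}_2$ inherits $-\Delta \otimes \mathbf{I}$-type structure, hence a nonpositive real spectrum. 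A secondary subtlety worth a sentence is the norm-equivalence step: $\|\mathbf{T} e^{-t\mathbf{K}} \mathbf{T}^{-1}\|$ equals $\|e^{-t\mathbf{K}}\|$ only up to the condition number $\kappa(\mathbf{T})$, which is constant in $t$ and therefore does not affect the exponential-decay conclusion; I would flag this so the equality in \cref{eq:diffusionDecay} is read as ``equal up to a time-independent constant,'' consistent with how the constant $c$ is treated in \Cref{thm:swanConstantRate}.
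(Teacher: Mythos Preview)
Your proposal is correct and follows essentially the same approach as the paper: solve the sensitivity ODE via the matrix exponential under the slowly-varying-Jacobian assumption, then invoke the heat-equation/Laplacian structure to conclude the Jacobian spectrum is real nonpositive, so $e^{t\mathbf{J}}$ reduces to $e^{-t\mathbf{K}}$. The paper's proof is slightly terser---it simply writes $\mathbf{J}=-\mathbf{K}$ (implicitly relying on $\mathbf{L}$ being symmetric so $\mathbf{T}$ is orthogonal) rather than carrying the conjugation $\mathbf{T}e^{-t\mathbf{K}}\mathbf{T}^{-1}$ and flagging the $\kappa(\mathbf{T})$ issue as you do, but the argument is the same.
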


See proof in Appendix B.
This result means that for diffusion methods, as $t$ grows, deeper layers are not able to share new information between nodes as effectively as earlier layers of the network. On the contrary, our SWAN maintains the same effectiveness, independently of time, meaning it retains its ability to share information across nodes with the same effectiveness in each layer of the network, regardless of its depth, as illustrated in Figure~\ref{fig:propagation} and Figure~\ref{fig:multiple_behaviors}.

\subsection{The Benefit of Spatial Antisymmetry}
\label{sec:benefitSpatial}
While oversquashig was not mathematically defined in \cite{alon2021oversquashing}, it was recently proposed in \cite{topping2022understanding, diGiovanniOversquashing} to quantify the level, or lack of oversquashing, by measuring the sensitivity of node embedding after $\ell$ layers with respect to the input of another node 
$\left\|\partial\mathbf{x}_v(\ell)/\partial\mathbf{x}_u(0)\right\|$, which can be bounded as follows: 
\begin{small}
\begin{equation}\label{eq:mpnn_oversquashing_1}
    \left\|\frac{\partial\mathbf{x}_v(\ell)}{\partial\mathbf{x}_u(0)}\right\| \leq \underbrace{(c_\sigma w p)^\ell}_{model}\underbrace{(\mathbf{O}^\ell)_{vu}}_{topology},
\end{equation}    
\end{small}
where $c_\sigma$ is the Lipschitz constant of the activation $\sigma$, $w$ is the maximal entry-value over all weight matrices, and $p$ is the embedding dimension. The term $\mathbf{O}=c_r\mathbf{I}+c_a\mathbf{A}\in\mathbb{R}^{n\times n}$ is the message passing matrix adopted by the MPNN, with $c_r$ and $c_a$ are the contributions of the residual and aggregation term. Oversquashing occurs if the right-hand side of \cref{eq:mpnn_oversquashing_1} is too small \citep{diGiovanniOversquashing}.

\noindent \textbf{The sensitivity of SWAN.}
We present the sensitivity bound of SWAN, with its proof in Appendix C. 
\begin{theorem}[SWAN sensitivity upper bound]\label{thm:swan_sensitivity}
    Consider SWAN (\cref{eq:new_adgn_nodewise}), with $\ell$ layers, and $u,v\in\mathcal{V}$ two connected nodes of the graph. The sensitivity of $v$'s embedding after $\ell$ layers with respect to the input of node $u$ is 
    \begin{small}
    \begin{equation}\label{eq:swan_oversquashing}
    \left\|\frac{\partial\mathbf{x}_v(\ell)}{\partial\mathbf{x}_u(0)}\right\| \leq \underbrace{(c_\sigma w p)^\ell}_{model}\underbrace{((c_r \mathbf{I} + c_a \mathbf{A} + \beta c_b \mathbf{S})^\ell)_{vu}}_{topology}
    \end{equation}
    \end{small}
     with $c_\sigma$ the Lipschitz constant of non-linearity $\sigma$, $w$ is the maximal entry-value of all weight matrices, $p$ the embedding dimension, $\mathbf{A}$ the graph shift operator, {$\mathbf{S}=(\tilde{\mathbf{{A}}}-\tilde{\mathbf{{A}}}^\top)$} the antisymmetric graph operator, and $c_r$ and $c_a$ the weighted contribution of the residual term and aggregation term.
\end{theorem}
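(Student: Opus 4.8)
The plan is to mirror the derivation of the MPNN sensitivity bound \cref{eq:mpnn_oversquashing_1} from \citet{diGiovanniOversquashing}, adapting it to the three-term update of SWAN in \cref{eq:new_adgn_nodewise,eq:aggFunc1,eq:aggFunc2}. First I would fix the explicit (forward Euler) discretization that turns the ODE into $\ell$ stacked layers, $\mathbf{x}_u(t{+}1) = \mathbf{x}_u(t) + \epsilon\,\sigma(\cdots)$, and absorb the step size $\epsilon$, together with the residual and aggregation weights, into scalar constants $c_r, c_a, c_b$. The pieces $c_r\mathbf{I} + c_a\mathbf{A}$ already appear in the MPNN matrix $\mathbf{O}$; the new ingredient is the antisymmetric channel $\beta\Psi$, which contributes the operator $\mathbf{S} = \tilde{\mathbf{A}} - \tilde{\mathbf{A}}^\top$ weighted by $\beta c_b$.

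The core is a one-layer sensitivity estimate. Differentiating a single SWAN layer, $\partial\mathbf{x}_v(t{+}1)/\partial\mathbf{x}_u(t)$ is a diagonal activation factor $\mathrm{diag}(\sigma'(\cdots))$ times a sum of three linear pieces: $(\mathbf{W}-\mathbf{W}^\top)$ acting when $u=v$; $(\hat{\mathbf{A}}_{vu}+\hat{\mathbf{A}}_{uv})(\mathbf{V}-\mathbf{V}^\top)$ acting when $u\in\mathcal{N}_v$; and $\beta(\tilde{\mathbf{A}}_{vu}-\tilde{\mathbf{A}}_{uv})(\mathbf{Z}+\mathbf{Z}^\top)$ acting when $u\in\mathcal{N}_v$. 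Taking operator norms, using $|\sigma'|\le c_\sigma$, bounding $\|\mathbf{W}-\mathbf{W}^\top\|,\ \|\mathbf{V}-\mathbf{V}^\top\|,\ \|\mathbf{Z}+\mathbf{Z}^\top\|\le w p$ (maximal entry $w$ times embedding dimension $p$), and identifying the scalar graph coefficients with the entries of the graph shift operator $\mathbf{A}$ (built from the symmetric part of $\hat{\mathbf{A}}$, consistent with $\Phi$) and of $\mathbf{S}$, one gets the entrywise bound $\|\partial\mathbf{x}_v(t{+}1)/\partial\mathbf{x}_u(t)\| \le c_\sigma w p\,(c_r\mathbf{I}+c_a\mathbf{A}+\beta c_b\mathbf{S})_{vu}$, with the \emph{same} bounding matrix at every layer since the weight bound $w$ is uniform across layers.

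Finally I would compose the $\ell$ layers via the multivariate chain rule, $\partial\mathbf{x}_v(\ell)/\partial\mathbf{x}_u(0) = \sum_{u=w_0,\ldots,w_\ell=v}\prod_{k=0}^{\ell-1}\partial\mathbf{x}_{w_{k+1}}(k{+}1)/\partial\mathbf{x}_{w_k}(k)$; bounding each factor by the corresponding entry of $c_\sigma w p\,(c_r\mathbf{I}+c_a\mathbf{A}+\beta c_b\mathbf{S})$, and summing over walks, is exactly matrix multiplication, yielding $(c_\sigma w p)^\ell\big((c_r\mathbf{I}+c_a\mathbf{A}+\beta c_b\mathbf{S})^\ell\big)_{vu}$, which is \cref{eq:swan_oversquashing}. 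Equivalently, the same conclusion follows by a one-line induction on $\ell$ using the triangle inequality and submultiplicativity of the entrywise bound, exactly as in the base case.

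The main obstacle I expect is the sign bookkeeping for $\mathbf{S} = \tilde{\mathbf{A}}-\tilde{\mathbf{A}}^\top$: unlike $\mathbf{I}$ and $\mathbf{A}$, it has entries of both signs, so the entrywise domination $\|\cdot\|\le(\cdots)_{vu}$, and especially its propagation to $(\cdots)^\ell_{vu}$, needs care — I would either replace $\mathbf{S}$ by its entrywise absolute value $|\mathbf{S}|$ throughout, or state the bound with $|\,\cdot\,|$ on the relevant entries, and then check that the walk-sum dominates term by term under that convention. A secondary point to get right is keeping the discretization step $\epsilon$ consistently folded into $c_r, c_a, c_b$ so the statement matches the coefficient names promised in the theorem, and confirming the $u\in\mathcal{N}_v$ support structure so that off-support entries of both $\mathbf{A}$ and $\mathbf{S}$ vanish, keeping the topology term a genuine power of a (weighted) adjacency-type matrix.
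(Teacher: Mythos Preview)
Your approach is essentially identical to the paper's: both follow \citet{diGiovanniOversquashing} by induction on $\ell$, bounding one layer's Jacobian entrywise via $|\sigma'|\le c_\sigma$ and the uniform weight bound $w$, then invoking the inductive hypothesis (equivalently, your walk-sum chain rule) to pick up the matrix power and the factor $p$. Your flagged concern about the sign of $\mathbf{S}=\tilde{\mathbf{A}}-\tilde{\mathbf{A}}^\top$ is apt --- the paper's own proof in Appendix~C writes $\mathbf{S}_{vz}$ without absolute value inside the inequality chain and does not address this point, so your proposed fix of replacing $\mathbf{S}$ by its entrywise absolute value is, if anything, more careful than the original.
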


The result of \cref{thm:swan_sensitivity} indicates that the added antisymmetric term $\Psi$ contributes to an increase in the measured upper bound. This result, together with the constant rate of information flow obtained from \cref{thm:swanConstantRate}, holds the potential to theoretically mitigate oversquashing using SWAN.

\section{Experiments}\label{sec:experiments}

\textbf{Objectives.} We evaluate our SWAN and compare it with various methods.
Specifically, we seek to address the following questions:
\begin{enumerate*}[label=(\roman*)]
    \item Can SWAN effectively propagate information to distant nodes?
    \item Can SWAN accurately predict graph properties related to long-range interactions?
    \item How does SWAN perform on real-world long-range benchmarks?
\end{enumerate*}

\noindent  \textbf{Baselines.} We consider \begin{enumerate*}[label=(\roman*)]
\item MPNNs with linear complexity (similar to the complexity of our SWAN), \ie GCN~\cite{kipf2016semi}, GraphSAGE~\cite{hamilton2017inductive}, GAT~\cite{velickovic2018graph}, GatedGCN~\cite{gatedgcn}, GIN~\cite{xu2019how}, GINE~\cite{Hu2020Strategies}, and GCNII~\cite{chen20simple}.
\item DE-GNNs such as DGC~\cite{DGC}, GRAND~\cite{chamberlain2021grand}, GraphCON~\cite{rusch2022graph}, and ADGN~\cite{gravina2022anti}.
\item Oversquashing designated methods such as Graph Transformers and Higher-Order GNNs, \ie Transformer~\cite{vaswani2017attention, dwivedi2021generalization}, DIGL~\cite{gasteiger_diffusion_2019}, MixHop~\cite{abu2019mixhop}, SAN~\cite{kreuzer2021rethinking}, GraphGPS~\cite{rampasek2022GPS}, and DRew~\cite{drew}.
\end{enumerate*}

\noindent \textbf{Experimental Details.} We discuss the datasets in our experiments in Appendix D, and the complexity of SWAN and provide runtimes in Appendix F. 
Our implementation uses PyTorch, and is available at  \url{https://github.com/gravins/SWAN}.

\noindent  \textbf{SWAN Variants.}  In the following experiments, we leverage the general formulation of our method, discretized by forward Euler (see Appendix A.1), and explore two main variants of SWAN, each distinguished by the implementation of the aggregation terms $\hat{\bfA}, \ \tilde{\bfA}$ in the functions $\Psi$ and $\Phi$, as shown in \cref{eq:aggFunc1,eq:aggFunc2}. Specifically, we consider (i) SWAN, which implements the aggregation terms using pre-defined operators, which are the symmetric normalized and random walk adjacency matrices, as described in Appendix A.2, and, (ii) SWAN-\textsc{learn} which utilizes the learned aggregation terms described in Appendix A.2.
Both variants follow the form of \cref{eq:aggFunc1,eq:aggFunc2}, in line with the theoretical analysis in \cref{sec:method}, and in particular \cref{thm:swanConstantRate}.
We refer the reader to Appendix E.1 for a detailed description. 

\subsection{Graph Transfer}\label{sec:exp_transfer}
\textbf{Setup.}
We consider a graph transfer task, where the goal is to transfer a label from a source to a target node, with a distance of $k$ hops. We note that this task can be effectively solved only by non-dissipative methods that preserve source information. 
We initialize nodes with a random valued feature 
and we assign values ``1'' and ``0'' to source and target nodes, respectively. We consider three graph distributions, \ie line, ring, crossed-ring, as illustrated in Figure 4, with four different distances $k=\{3,5,10,50\}$. Increasing $k$ increases the task complexity. In Appendix D.1 and E.2, we provide additional details about the dataset and the task.

\noindent  \textbf{Results.}
Figure~\ref{fig:graph_transfer_res} reports the results on graph transfer tasks. Overall, baseline methods struggle to accurately transfer the information through the graph, especially when the distance is high, \ie $\#hops\geq10$. Differently, non-dissipative methods, such as ADGN and SWAN, achieve low errors across all distances. Moreover, SWAN consistently outperforms ADGN, empirically supporting our theoretical findings that SWAN can better propagate information among distant nodes.
\begin{figure*}[t]
\centering
\includegraphics[width=0.8\linewidth]{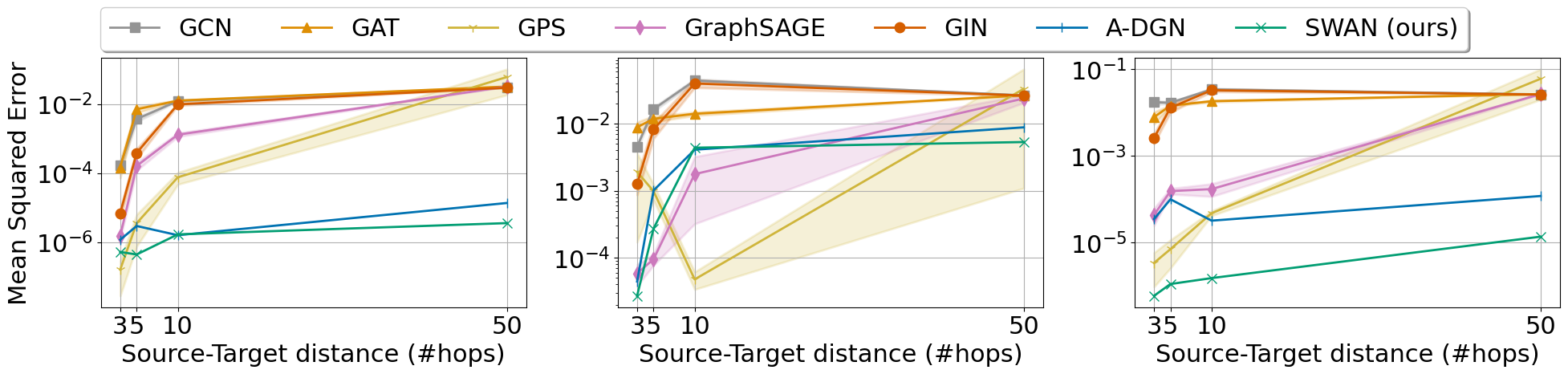}\\
{\footnotesize \hspace{15mm}(a) Line \hspace{45mm}(b) Ring \hspace{35mm}(c) Crossed-Ring}
\caption{Information transfer performance on (a) Line, (b) Ring, and (c) Crossed-Ring graphs. Non-dissipative methods like ADGN and SWAN allow for the accurate transfer of information.}
\label{fig:graph_transfer_res}
\end{figure*}
\subsection{Graph Property Prediction}\label{sec:exp_graph_prop_pred}
\begin{table}[t]
\setlength{\tabcolsep}{1pt}
\centering
\footnotesize
\vspace{-2mm}
\begin{tabular}{lccc}
\hline\toprule
\textbf{Model} &\textbf{Diameter} & \textbf{SSSP} & \textbf{Eccentricity} \\\midrule
\textbf{MPNNs} \\
$\,$ GCN            & 0.7424$_{\pm0.0466}$ & 0.9499$_{\pm9.18\cdot10^{-5}}$ & 0.8468$_{\pm0.0028}$ \\
$\,$ GAT            & 0.8221$_{\pm0.0752}$ & 0.6951$_{\pm0.1499}$           & 0.7909$_{\pm0.0222}$  \\
$\,$ GraphSAGE      & 0.8645$_{\pm0.0401}$ & 0.2863$_{\pm0.1843}$           &  0.7863$_{\pm0.0207}$\\
$\,$ GIN            & 0.6131$_{\pm0.0990}$ & -0.5408$_{\pm0.4193}$          & 0.9504$_{\pm0.0007}$\\
$\,$  GCNII          & 0.5287$_{\pm0.0570}$ & -1.1329$_{\pm0.0135}$          & 0.7640$_{\pm0.0355}$\\
\midrule
\textbf{DE-GNNs} \\
$\,$ DGC            & 0.6028$_{\pm0.0050}$ & -0.1483$_{\pm0.0231}$          & 0.8261$_{\pm0.0032}$\\
$\,$ GRAND          & 0.6715$_{\pm0.0490}$ & -0.0942$_{\pm0.3897}$          & \four{0.6602$_{\pm0.1393}$} \\
$\,$ GraphCON       & \four{0.0964$_{\pm0.0620}$} & \four{-1.3836$_{\pm0.0092}$} & 0.6833$_{\pm0.0074}$\\

$\,$ ADGN
& \three{-0.5188$_{\pm0.1812}$} & \two{-3.2417$_{\pm0.0751}$} & \three{0.4296$_{\pm0.1003}$}  \\

\midrule
\textbf{Ours} \\


$\,$ 
SWAN & \two{-0.5249$_{\pm0.0155}$} &  \three{-3.2370$_{\pm0.0834}$} & \two{0.4094$_{\pm0.0764}$} \\


$\,$ 
SWAN-\textsc{learn} & \one{-0.5981$_{\pm0.1145}$}  & \one{-3.5425$_{\pm0.0830}$}  & \one{-0.0739$_{\pm0.2190}$} \\



\bottomrule\hline      
\end{tabular}
\caption{Mean test set {\small$log_{10}(\mathrm{MSE})$} and std averaged over 4 random weight initializations on the Graph Property Prediction tasks. The lower, the better. 
\one{First}, \two{second}, \three{third} and \four{fourth} best results for each task are color-coded. 
\label{tab:results_GraphProp_complete}
}
\end{table}

\textbf{Setup.}
We consider the prediction of three graph properties -  Diameter, Single-Source Shortest Paths (SSSP), and node Eccentricity on synthetic graphs \cite{PNA}, and follow the setup outlined in \citet{gravina2022anti}, further discussed in Appendix D.2 and E.3. These three tasks rely on long-range interactions for shortest-path calculations, as in Bellman-Ford and Dijkstra's algorithms.

\noindent  \textbf{Results.}
In \cref{tab:results_GraphProp_complete}, we compare SWAN and SWAN-\textsc{learn} with  MPNNs and DE-GNNs, indicating that SWAN consistently improves performance, where SWAN-\textsc{learn} yields the best results, with an improvement of up to 117\% of the runner-up model. In Appendix G we also provide results for SWAN with layer-dependent weights showing improved performance.
We note that similarly to the transfer task in Section~\ref{sec:exp_transfer}, solving  Graph Property Prediction tasks necessitates capturing long-term dependencies. Hence, successful prediction requires the mitigation of oversquashing. For instance, in the eccentricity task, the goal is to calculate the maximal shortest path between node $u$ and all other nodes -- a task requiring propagation from distant nodes. 

\subsection{Long-Range Graph Benchmark}\label{sec:exp_lrb}
\noindent  \textbf{Setup.} We follow the settings from 
\citet{dwivedi2022LRGB}  on the Peptides-func, Peptides-struct, and PascalVOC-sp datasets, on which we elaborate in Appendix D.3 and E.4. As metrics, we consider the average precision (AP) for the Peptides-func, mean-absolute-error (MAE) for Peptides-struct, and the macro-weighted F1 score for PascalVOC-sp. 

\noindent  \textbf{Results.}
Table \ref{tab:lrgb_results} compares SWAN with various MPNNs, DE-GNNs, and Graph Transformers. 
\begin{table}[t]
\footnotesize
\setlength{\tabcolsep}{3.5pt}
\centering
\footnotesize
\vspace{-2mm}
\begin{tabular}{@{}lccc@{}}
\hline\toprule
\multirow{3}{*}{\textbf{Model}} & \textbf{Peptides-}  & \textbf{Peptides-} & \textbf{Pascal}                \\
& \textbf{func} & \textbf{struct} & \textbf{VOC-sp}
                               \\
                                & \scriptsize{AP $\uparrow$}                             & \scriptsize{MAE $\downarrow$}  & \scriptsize{F1 $\uparrow$}                           \\ \midrule
\textbf{MPNNs} \\
$\,$ GCN         & 59.30$_{\pm0.23}$ & 0.3496$_{\pm0.0013}$ & 12.68$_{\pm0.60}$\\
$\,$ GINE        & 54.98$_{\pm0.79}$ & 0.3547$_{\pm0.0045}$ & 12.65$_{\pm0.76}$\\
$\,$ GCNII       & 55.43$_{\pm0.78}$ & 0.3471$_{\pm0.0010}$ & 16.98$_{\pm0.80}$\\
$\,$ GatedGCN    & 58.64$_{\pm0.77}$ & 0.3420$_{\pm0.0013}$ & 28.73$_{\pm2.19}$\\
 \midrule
\textbf{Transformers} \\
$\,$ Transformer+LapPE  & 63.26$_{\pm1.26}$          & \three{0.2529$_{\pm0.0016}$}        & 26.94$_{\pm0.98}$ \\
$\,$ SAN+LapPE          & \three{63.84$_{\pm1.21}$}  & 0.2683$_{\pm0.0043}$                & \two{32.30$_{\pm0.39}$} \\
$\,$ GraphGPS+LapPE     & \two{65.35$_{\pm0.41}$}    & \two{0.2500$_{\pm0.0005}$} & \one{37.48$_{\pm1.09}$}\ \\ \midrule

\textbf{DE-GNNs} \\
$\,$ GRAND    & 57.89$_{\pm0.62}$ & 0.3418$_{\pm0.0015}$ &  19.18$_{\pm0.97}$  \\
$\,$ GraphCON & 60.22$_{\pm0.68}$ & 0.2778$_{\pm0.0018}$ &  21.08$_{\pm0.91}$ \\
$\,$ ADGN     & 59.75$_{\pm0.44}$ & 0.2874$_{\pm0.0021}$ &  23.49$_{\pm0.54}$ \\ 
\midrule
\textbf{Ours} \\    
$\,$ SWAN                & 63.13$_{\pm0.46}$ & 0.2571$_{\pm0.0018}$              & 27.96$_{\pm0.48}$\\
$\,$ SWAN-\textsc{learn} & \one{67.51$_{\pm0.39}$} & \one{0.2485$_{\pm0.0009}$}  & \three{31.92$_{\pm2.50}$} \\
\bottomrule\hline
\end{tabular}
\caption{Performance of standard MPNNs, graph Transformers, DE-GNNs, and our SWAN across three LRGB tasks. Results are averaged over 3 weight initializations. The \one{first}, \two{second}, and \three{third} best results for each task are color-coded.
\label{tab:lrgb_results}
}
\end{table}
In Appendix G, we also provide a comparison with multi-hop methods. 
Our results in Table \ref{tab:lrgb_results} suggest the following:
\begin{enumerate*}[label=(\roman*)]
    \item SWAN achieves significantly better results than standard MPNNs such as GCN, GINE, or GCNII. For example, on Peptides-func, SWAN-\textsc{learn} achieves an average precision of 66.54, while GCN achieves a score of 59.30.
    \item Compared with Transformers, which are of complexity ${\cal{O}}(|V|^2)$, our SWAN achieves better performance while remaining with a linear complexity of $\cal{O}(|V|+|E|)$.
    \item Among its class of DE-GNNs, our SWAN offers overall better performance.
\end{enumerate*}

\begin{table*}[t]
\centering
\footnotesize
\setlength{\tabcolsep}{3.5pt}
\begin{tabular}{lcccccc}
\hline\toprule
\footnotesize
\textbf{Dataset $\downarrow$ / Model $\rightarrow$} & \textbf{SWAN$_{\beta=0}$} & \textbf{SWAN-\textsc{ne}} & \textbf{SWAN-\textsc{ne-learn}} & \textbf{SWAN} & \textbf{SWAN-\textsc{learn}} \\\midrule
\textbf{Diam.} \scriptsize{($log_{10}$(MSE)$\downarrow$)} & -0.3882$_{\pm0.0610}$ & \three{-0.5497$_{\pm0.0766}$} & \two{-0.5631$_{\pm0.0694}$} & -0.5249$_{\pm0.0155}$ & \one{-0.5981$_{\pm0.1145}$} \\
\textbf{SSSP} \scriptsize{($log_{10}$(MSE)$\downarrow$)} & -3.2061$_{\pm0.0416}$ & -3.1913$_{\pm0.0762}$ & \two{-3.5296$_{\pm0.0831}$} & \three{-3.2370$_{\pm0.0834}$} & \one{-3.5425$_{\pm0.0830}$} \\
\textbf{Ecc.} \scriptsize{($log_{10}$(MSE)$\downarrow$)} & 0.5573$_{\pm0.0247}$ & \three{0.3792$_{\pm0.1514}$} & \two{0.1317$_{\pm0.1253}$} & 0.4094$_{\pm0.0764}$ & \one{-0.0739$_{\pm0.219}$} \\
\textbf{Peptides-func} \scriptsize{(AP $\uparrow$)} & 61.95$_{\pm00.67}$ & 61.19$_{\pm0.37}$ & \three{62.49$_{\pm0.51}$} & \two{63.13$_{\pm0.46}$} & \one{67.51$_{\pm0.39}$} \\
\textbf{Peptides-struct} \scriptsize{(MAE $\downarrow$)} & 0.2703$_{\pm0.0023}$ & 0.2672$_{\pm0.0012}$ & \three{0.2606$_{\pm0.0007}$} & \two{0.2571$_{\pm0.0018}$} & \one{0.2485$_{\pm0.0009}$} \\
\bottomrule\hline
\end{tabular}
\caption{Performance of different versions of SWAN on Graph Property Prediction and LRGB tasks. Results are averaged over 4 random weight initializations on the Graph Property Prediction, while over 3 on the LRGB. The \one{first}, \two{second}, and \three{third} best results for each task are color-coded. Global and Local Non-Dissipative variants achieve highest performance.\label{tab:importance}}
\end{table*}

\subsection{Ablation Study}
\label{sec:ablation}
We now study the contribution of global (graph-wise) and local (node-wise) non-dissipativity, and spatial antisymmetry.

\noindent  \textbf{The importance of global and local non-dissipativity.}
To verify the contribution of the global and local non-dissipativity in SWAN, we evaluate the performance of several variants that can deviate from being globally and locally non-dissipative, although in a bounded manner, as discussed in Appendix E.1. Specifically, we consider the non-enforced (NE) variants of SWAN and SWAN-\textsc{learn}, which use an unconstrained weight matrix $\mathbf{V}$, rather than forcing it to be antisymmetric as in SWAN. These two additional variants are called SWAN-\textsc{ne} and SWAN-\textsc{learn-ne}, respectively.  Table~\ref{tab:importance} shows the performance of the SWAN on the Property Prediction and LRGB tasks. The highest performance on synthetic and real-world problems was achieved with SWAN-\textsc{learn}, which is globally and locally non-dissipative.

\noindent  \textbf{The benefit of spatial antisymmetry.} 
Table~\ref{tab:importance} highlights the advantages of the spatial antisymmetry in \cref{eq:new_adgn_nodewise}.  By setting $\beta=0$, we obtain a model with antisymmetry solely in the weight space, exhibiting only a local non-dissipative behavior. Our results indicate a noteworthy performance improvement when spatial antisymmetry is employed. This is further supported by the improved performance of -\textsc{ne} versions of SWAN, which do not guarantee both global and local non-dissipative behavior, compared to SWAN$_{\beta=0}$. 

\section{Related Work}
\label{sec:related}
\textbf{Graph Neural Networks based on Differential Equations.}
Adopting the interpretation of convolutional neural networks (CNNs) as discretization of ODEs and PDEs  \citep{RuthottoHaber2018, chen2018neural} to GNNs, works like CGNN \citep{pmlr-v119-xhonneux20a}, GCDE \citep{poli}, GODE \citep{zhuang2020ordinary}, GRAND \citep{chamberlain2021grand}, PDE-GCN\textsubscript{D} \citep{eliasof2021pde},  DGC~\cite{DGC}, and others, propose interpreting GNN layers as discretization steps of the heat equation
. This 
allows controlling the diffusion (smoothing) in the network and understanding the problem of oversmoothing \citep{nt2019revisiting,oono2020graph,cai2020note} in GNNs. 
Differently, \citet{pmlr-v162-choromanski22a} 
propose an architecture with an attention mechanism based on the heat diffusion kernel.
Other architectures like PDE-GCN\textsubscript{M} \citep{eliasof2021pde} and GraphCON \citep{rusch2022graph} propose to mix diffusion and oscillatory processes as a feature energy preservation mechanism. Other recent works 
proposed mechanisms such as anti-symmetry \cite{gravina2022anti}, reaction-diffusion-based dynamics \cite{wang2022acmp, choi2022gread}, 
and advection-reaction-diffusion \cite{eliasof2023adr}. However, most of the aforementioned works are diffusion-based DE-GNNs, limited in modeling long-range interactions, while our SWAN is a DE-GNN with a constant information flow, addressing oversquashing in graphs.
While the aforementioned works focus on the spatial aggregation term of DE-GNNs, the temporal domain of DE-GNNs has also been studied in \cite{eliasof2024temporal,gravina2024ctan, kang2024unleashing,gravina2024temporal}. A  review of these methods is given in \citet{han2023continuous}. 

\noindent  \textbf{Oversquashing in MPNNs.} 
Oversquashing in MPNNs, which hampers information transfer across distant nodes~\cite{alon2021oversquashing}, has prompted various mitigation strategies. \emph{Graph rewiring} methods like SDRF~\cite{topping2022understanding} densify graphs as a preprocessing step, while approaches such as GRAND~\cite{chamberlain2021grand}, BLEND~\cite{blend}, and DRew~\cite{drew} dynamically adjust connectivity based on node features. Transformer-based models~\cite{ dwivedi2021generalization, rampasek2022GPS} bypass oversquashing with all-to-all message passing. Another direction uses \emph{non-local dynamics} to enable dense communication, as in FLODE~\cite{maskey2023fractional}, which leverages fractional graph shifts, QDC~\cite{markovich2023qdc} with quantum diffusion kernels, and G2TN~\cite{g2tn}, which captures diffusion paths. While effective, these methods often increase computational complexity due to dense propagation operators. For further discussion, see \cite{shi2023exposition}. We note that, long-range interactions have also been explored in sequential models~\cite{lstm, ssm}.

\noindent   \textbf{Non-dissipative Systems.} 
 Non-dissipative Systems are characterized by the absence of energy dissipation, thus playing a crucial role in various domains such as physics \cite{goldstein2002classical}, engineering \cite{ogata2010modern}, and machine learning, where such systems were shown to effectively model information flow. For example, the effectiveness of a non-dissipativity was demonstrated to enhance the power of reservoir 
 computing \cite{GALLICCHIO2024127411} and recurrent neural networks \cite{chang2018antisymmetricrnn}. In the context of GNNs, it was shown in \cite{gravina2022anti} that \emph{local} non-dissipativity is beneficial for long interaction modeling.

\section{Summary}
\label{sec:summary}
In this work, we have presented SWAN (Space-Weight ANtisymmetry), a novel Differential Equation GNN (DE-GNN) designed to address the oversquashing problem. SWAN incorporates both global (\ie graph-wise) and local (\ie node-wise) non-dissipative properties through space and weight antisymmetric parameterization, and provides a general design principle for introducing non-dissipativity as an inductive bias in any DE-GNN.
Our theoretical and experimental results emphasize the significance of global and local non-dissipativity, achieved by SWAN. For such reasons, we believe SWAN represents a significant step forward in addressing oversquashing in GNNs.

\section*{Acknowledgments}
The work has been partially supported by EU-EIC EMERGE (Grant No. 101070918).  ME is funded by the Blavatnik-Cambridge fellowship, the Accelerate Programme for Scientific Discovery, and the Maths4DL EPSRC Programme.

\clearpage

\appendix
\onecolumn

\section{Architectural Details}
\label{app:architecture}
\subsection{Integration of SWAN}\label{app:swan_integration}
The ODE that defines SWAN follows the general DE-GNNs form, presented and discussed in \cref{sec:degnns}. While there are various ways to integrate these equations (see for example various integration techniques in \cite{AscherPetzoldODEs}), we follow the common forward Euler discretization approach, which is abundantly used in DE-GNNs literature \cite{chamberlain2021grand, eliasof2021pde, gravina2022anti, rusch2022graph}. Formally, using the forward Euler method to discretize SWAN (\cref{eq:new_adgn_graphwise}) yields the following graph neural layer:
\begin{small}
\begin{equation}
    \label{eq:discretization}
    \bfX^{(\ell+1)} = \bfX^{(\ell)} + \epsilon \sigma \Bigl(\mathbf{X}^{(\ell)}{(\mathbf{W}-\mathbf{W}^\top)} \nonumber\\ 
    +{\Phi(\mathbf{A},\mathbf{X}^{(\ell)},\mathbf{V})} \nonumber\\ 
    + \beta{\Psi(\mathbf{A}, \mathbf{X}^{(\ell)}, \mathbf{Z})}\Bigr).
\end{equation}
\end{small}
Note, that in this procedure we replace the notion of \emph{time} with \emph{layers}, and therefore instead of using $t$ to denote time, we use $\ell$ to denote the step or layer number. Here, $\epsilon$ is the discretization time step, which replaces the infinitesimal $dt$ from \cref{eq:new_adgn_graphwise}.
Furthermore, we note that similar to other DE-GNNs \cite{chamberlain2021grand, gravina2022anti} we use \emph{weight-sharing}, such that the learnable matrices $\bfW,\bfV, \bfZ$, as well as the spatial aggregation matrices $\hat{\bfA}, \tilde{\bfA}$ in $\Phi, \Psi$ (as shown in \cref{eq:aggFunc1,eq:aggFunc2}, and discussed in Appendix~\ref{app:learningSpatial}) are shared across layers.

Lastly, we notice that Euler's forward method is stable if $(1+\epsilon\lambda(\mathbf{J}(t)))$ resides within the unit circle in the complex plane for all eigenvalues of the system \cite{AscherPetzoldODEs}. However, given that the Jacobian matrix's eigenvalues are purely imaginary with our method, it implies $|1+\epsilon\lambda(\mathbf{J}(t))| > 1$. Consequently, Equations \eqref{eq:new_adgn_node} and \eqref{eq:new_adgn_graphwise} become unstable when solved using the forward Euler's method. To strengthen the stability of the numerical discretization method, similar to \cite{gravina2022anti} we introduce a small positive constant $\gamma > 0$, which is subtracted from the diagonal elements of the weight matrix $\mathbf{W}$, with the aim of placing back $(1+\epsilon\lambda(\mathbf{J}(t)))$ within the unit circle.

\subsection{Spatial Aggregation Terms}
\label{app:learningSpatial}
In \cref{eq:aggFunc1,eq:aggFunc2} we utilize two aggregation terms, denoted by $\hat{\bfA}$ and $\tilde{\bfA}$. The first, $\tilde{\bfA}$, is used to populate the \emph{weight} antisymmetric term $\Phi$, while the second, $\hat{\bfA}$, is used to populate the \emph{space} antisymmetric $\Psi$. As discussed, in our experiments we consider two possible parameterizations of these terms, on which we now elaborate.

\noindent \textbf{Pre-defined $\hat{\bfA}, \ \tilde{\bfA}$.} In this case, we denote our architecture as \emph{SWAN} and utilize the symmetric normalized adjacency matrix and the random walk normalized adjacency matrix for $\hat{\bfA}, \ \tilde{\bfA}$, respectively. Formally:
\begin{small}
\begin{equation}
    \label{eq:fixedOperators}
    \hat{\bfA} = \bfD^{-1/2}\bfA \bfD^{-1/2}, \quad \tilde{\bfA} = \bfD^{-1}\bfA,
\end{equation}
\end{small}
where $\bfA$ is the standard binary adjacency matrix that is induced by the graph connectivity $\mathcal{E}$, and $\bfD$ is the degree matrix of the graph.
We observe that the implementation of $\hat{\bfA}$ and $\tilde{\bfA}$ can be treated as a hyperparameter. To show this, in our experiments, we consider both the symmetric normalized adjacency matrix and the original adjacency matrix as implementations of $\hat{\bfA}$.

\noindent \textbf{Learnable $\hat{\bfA}, \ \tilde{\bfA}$.} Here, we denote our architecture by \emph{SWAN}-\textsc{learn}, and we use a multilayer perceptron (MLP) to learn edge-weights according to the original graph connectivity, to implement learnable $\hat{\bfA}, \ \tilde{\bfA}$. Specifically, we first define edge features as the concatenation of the initial embedding of input node features $\bfX^{(0)}$ of neighboring edges. Formally, the edge features of the $(u,v) \in \cal{E}$ edge, read: 
\begin{small}    
\begin{equation}
    \label{eq:edgeFeat}
    f_{(u,v)\in \cal{E}}^{in} = \bfx_u^{(0)} \oplus \bfx_v^{(0)}, \  f_{(u,v)\in \cal{E}}^{in} \in \mathbb{R}^{2d},
\end{equation}
\end{small}
where $\oplus$ denotes the channel-wise concatenation operator.
Then, we embed those features using a 2 layer MLP:
\begin{small}
\begin{equation}
    \label{eq:embedEdges}
    f_{(u,v)\in \cal{E}}^{emb} = {\rm{ReLU}}(\bfK_2 \sigma(\bfK_1(f_{((u,v)\in \cal{E})}^{in}))),
\end{equation}    
\end{small}
where $\bfK_1 \in \mathbb{R}^{d \times 2d}$ and $\bfK_2 \in \mathbb{R}^{n \times n}$ are learnable linear layer weights, and $\sigma$ is an activation function which is a hyperparameter of our method described in Appendix~\ref{app:hyperparams}. By averaging the feature dimension and gathering the averaged edge features $f_{(u,v)\in \cal{E}}^{emb}$ into a sparse matrix $\bfF \in \mathbb{R}^{|\mathcal{V}|\times|\mathcal{V}|}$, such that $\bfF_{u,v} = \frac{1}{d} \sum f_{(u,v)\in \cal{E}}^{emb}$ we define the learned spatial aggregation terms as follows:
\begin{small}    
\begin{equation}
    \label{eq:learnOperators}
    \hat{\bfA}_{\bfF} = \bfD_{\bfF}^{-1/2}\bfF \bfD_{\bfF}^{-1/2}, \quad \tilde{\bfA}_{\bfF} = \bfD_{\bfF}^{-1}\bfF,
\end{equation}
\end{small}
where $\bfD_{\bfF}$ is the degree matrix of $\bfF$, i.e., a matrix with the column sum of $\bfF$ on its diagonal and zeros elsewhere.

\subsection{The role of $\gamma$ and $\epsilon$}
The $\gamma$ term in SWAN refers to a small positive constant subtracted from the diagonal elements of the weight matrix to balance the stability of the numerical discretization method (i.e., Euler's forward method). According to \cite{AscherPetzoldODEs}, the Euler’s forward method applied to \cref{eq:new_adgn_nodewise} is stable when $|1+\epsilon\lambda(\mathbf{J}(t))| > 1$, with $\epsilon$ the step size. Since SWAN has $Re(\lambda(\mathbf{J}(t)))=0$, subtracting the constant $\gamma$ ensures that the Euler method remains stable. Therefore, as $\gamma$ approaches 0, the discretization method becomes increasingly unstable, requiring taking a smaller step size $\epsilon$.

\subsection{Applicability of SWAN to general MPNNs}\label{app:mpnn_applicability}
Nowadays, most GNNs rely on the concepts introduced by the Message Passing Neural Network (MPNN) \cite{gilmer2017neural}, which is a general framework based on the message-passing paradigm. 
A general MPNN updates the representation for a node $u$ by using message and update functions. The first (message function) is responsible for defining the messages between nodes and their neighbors. On the other hand, the update function has the role of collecting (aggregating) messages and updating the node representation. Our SWAN, in \cref{eq:new_adgn_nodewise}, operates according to the MPNN paradigm, with the functions $\Phi$ and $\Psi$ that operate as the message function, while the sum operator among the node and neighborhood representations is the update function. Therefore, SWAN can be interpreted as a special case of an MPNN with the aforementioned parameterization and can potentially be applied to different types of MPNNs. Lastly, we note that, although we implemented $\Phi$ and $\Psi$ as \cref{eq:aggFunc1,eq:aggFunc2}, their general formulation provides a flexible design principle for incorporating non-dissipativity as
an inductive bias into any DE-GNN, while remaining accessible for non-experts users. Specifically, the proposed antisymmetric aggregation function ($\Psi$) can be easily implemented by leveraging $\bfA - \bfA^\top$ as the antisymmetric graph shift operator, while the weight matrix can be symmetrized using $\bfZ + \bfZ^\top$, making it compatible with standard methods such as GCN and GAT.

\section{Proof of Theorem~\ref{thm:diffusionExpDecay}}\label{app:diffusionExpDecay}
The following proof follows the sensitivity analysis introduced in Section~\ref{sec:math}. 
\begin{proof}

Let us assume a diffusion-based network whose Jacobian's eigenvalues are represented by the diagonal matrix $\mathbf{\Lambda}$, and let us denote the eigenvalues magnitude by a diagonal matrix $\bfK\in\mathbb{R}_+^{n\times n}$ such that $\bfK_{ii} = | \mathbf{\Lambda}_{ii} |$ for $i \in \{0,\ldots, n-1\}$. Applying the vectorization operator, it is true that :
\begin{small}
\begin{equation}\label{eq:proofdiffusionExpDecay}
   \frac{\partial \rm{vec}(\mathbf{X}(t))}{\partial\rm{vec}(\mathbf{X}(0))} = e^{t \mathbf{J}} 
\end{equation}
\end{small}
As it is known from \cite{EvansPDE}, diffusion-based networks are characterized by Jacobian's eigenvalues with a negative real part. Indeed, diffusion DE-GNNs are based on the heat equation. Therefore, the right-hand side of the ODE is the graph Laplacian, reading $\frac{\partial\bfX(t)}{\partial t} = -\bfL\bfX(t)$. Therefore, following our derivations in Section \ref{sec:math}, we analyze $-\bfL$. It is known that the graph Laplacian has non-negative eigenvalues, and therefore $-\bfL$ has non-positive eigenvalues. Thus, the Jacobian has non-positive eigenvalues. Assuming a connected graph (\ie it exists at least one edge), then there is at least one value that is strictly negative in $-\bfL$, and any entry that is not non-negative will be equal to zero. Therefore, we can write $\bfJ= - \bfK$, leading to the equation
\begin{small}
\begin{equation}       
    \left\Vert\frac{\partial \rm{vec}(\mathbf{X}(t))}{\partial \rm{vec}(\mathbf{X}(0))}\right\Vert = \|e^{-t\bfK}\|. 
\end{equation}
\end{small}
Therefore, the information propagation rate among the graph nodes exponentially decays over time. Note that at $t \rightarrow \infty$, it will converge to 1, which is exponentially lower than the propagation rate at early time $t$.

\end{proof}

\section{Proof of Theorem~\ref{thm:swan_sensitivity}}\label{app:proof_sensitivity}
This proof follows the one proposed in \cite{diGiovanniOversquashing} (Appendix B). 
\begin{proof}
We proceed by induction and we show only the inductive step (\ie $\ell > 1$), since the case $\ell=1$ is straightforward (we refer to \cite{diGiovanniOversquashing} for more details). Assuming the Einstein summation convention and given that $\mathbf{\hat{W}} = \mathbf{W}- \mathbf{W}^\top$, $\mathbf{\hat{Z}} = \mathbf{Z}+ \mathbf{Z}^\top$, and $\alpha,\beta\in [p]$, we have:
\begin{small}
\begin{alignat*}{3}
\left|\frac{\partial \mathbf{x}_v^\alpha(\ell+1)}{\partial \mathbf{x}_u^\gamma(0)}\right| &= \left|\sigma^\prime\Bigl(c_r \mathbf{\hat{W}}^{(\ell)}_{\alpha, k} \frac{\partial \mathbf{x}_v^k(\ell)}{\partial \mathbf{x}_u^\gamma(0)} + c_a \mathbf{V}^{(\ell)}_{\alpha, k} \mathbf{A}_{vz} \frac{\partial \mathbf{x}_z^k(\ell)}{\partial \mathbf{x}_u^\gamma(0)} + \beta c_b \mathbf{\hat{Z}}^{(\ell)}_{\alpha, k} \mathbf{S}_{vz} \frac{\partial \mathbf{x}_z^k(\ell)}{\partial \mathbf{x}_u^\gamma(0)}\Bigr)\right|\\
 &\leq |\sigma^\prime|\Bigl(c_r |\mathbf{\hat{W}}^{(\ell)}_{\alpha, k}| \left|\frac{\partial \mathbf{x}_v^k(\ell)}{\partial \mathbf{x}_u^\gamma(0)}\right| + c_a |\mathbf{V}^{(\ell)}_{\alpha, k}| \mathbf{A}_{vz} \left|\frac{\partial \mathbf{x}_z^k(\ell)}{\partial \mathbf{x}_u^\gamma(0)}\right| + \beta c_b |\mathbf{\hat{Z}}^{(\ell)}_{\alpha, k}| \mathbf{S}_{vz} \left|\frac{\partial \mathbf{x}_z^k(\ell)}{\partial \mathbf{x}_u^\gamma(0)}\right|\Bigr)\\
 &\leq c_\sigma w\Bigl(c_r \left\Vert\frac{\partial \mathbf{x}_v(\ell)}{\partial \mathbf{x}_u(0)}\right\Vert + c_a \mathbf{A}_{vz} \left\Vert\frac{\partial \mathbf{x}_z(\ell)}{\partial \mathbf{x}_u(0)}\right\Vert + \beta c_b \mathbf{S}_{vz} \left\Vert\frac{\partial \mathbf{x}_z(\ell)}{\partial \mathbf{x}_u(0)}\right\Vert\Bigr)\\
    &\leq c_\sigma w (c_\sigma wp)^\ell\Bigl(c_r ((c_r \mathbf{I} + c_a \mathbf{A} + \beta c_b \mathbf{S})^{\ell})_{vu} + c_a \mathbf{A}_{vz} ((c_r \mathbf{I} + c_a \mathbf{A} + \beta c_b \mathbf{S})^{\ell})_{vz} + \\ &\hspace{5,7cm}+\beta c_b \mathbf{S}_{vz}((c_r \mathbf{I} + c_a \mathbf{A} + \beta c_b \mathbf{S})^{\ell})_{vz}\Bigr)\\
    &\leq c_\sigma w (c_\sigma wp)^\ell\Bigl((c_r \mathbf{I} + c_a \mathbf{A} + \beta c_b \mathbf{S})^{\ell+1}\Bigr)_{vu}
\end{alignat*}
\end{small}
where $|\cdot|$ denotes an absolute value of a real number, $w$ is the maximal entry-value over all weight matrices, and $c_\sigma$, $c_r$, $c_a$, and $c_b$ are the Lipschitz maps of the components in the computation of SWAN.
We can now sum over $\alpha$ on the left, generating an extra $p$ on the right side.
\end{proof}

\section{Datasets Details }\label{app:datasets}
\subsection{Graph Transfer Dataset} \label{app:datasets_graphtransfer}
We built the graph transfer datasets upon \cite{diGiovanniOversquashing}. In each task, graphs use identical topology, but, differently from the original work, nodes are initialized with random input features sampled from a uniform distribution in the interval $[0, 0.5)$. In each graph, we selected a source node and target node and initialized them with labels of value ``1'' and ``0'', respectively. We sampled graphs from three graph distributions, \ie line, ring, and crossed-ring. 
 Figure~\ref{fig:graph_transfer_example} shows a visual exemplification of the three types of graphs when the distance between the source and target nodes is 5. Specifically, ring graphs are cycles of size $n$, in which the target and source nodes are placed at a distance of $\lfloor n/2 \rfloor$ from each other. Crossed-ring graphs are also cycles of size $n$, but include crosses between intermediate nodes. Even in this case, the distance between source and target nodes remains $\lfloor n/2 \rfloor$. Lastly, the line graph contains a path of length $n$ between the source and target node. 
 In our experiments, we consider a regression task, whose aim is to swap source and target node labels while maintaining intermediate nodes unchanged. We use an input dimension of 1, and the distance between source and target nodes is equal to 3, 5, 10, and 50. We generated 1000 graphs for training, 100 for validation, and 100 for testing.

\begin{figure}[h]
\centering
\begin{subfigure}{0.29\textwidth}
    \includegraphics[width=\textwidth]{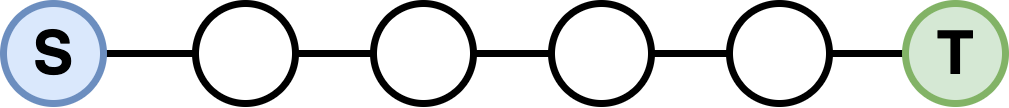}
    \vspace{0.12cm}
    \caption{Line}
\end{subfigure}
\hspace{0.8cm}
\begin{subfigure}{0.26\textwidth}
    \includegraphics[width=\textwidth]{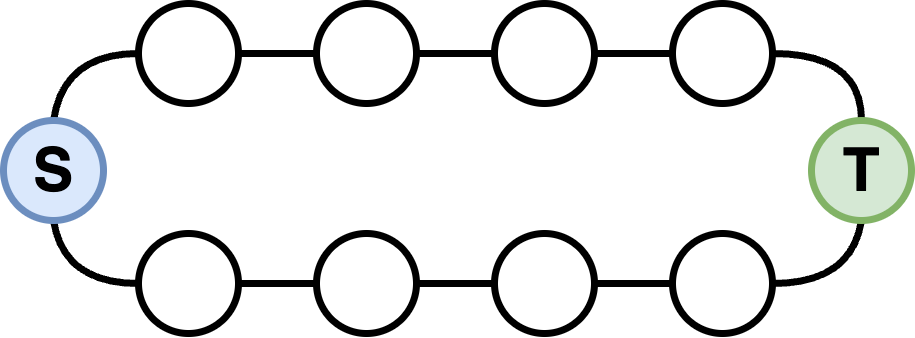}
    \caption{Ring}
\end{subfigure}
\hspace{0.8cm}
    \begin{subfigure}{0.26\textwidth}
    \includegraphics[width=\textwidth]{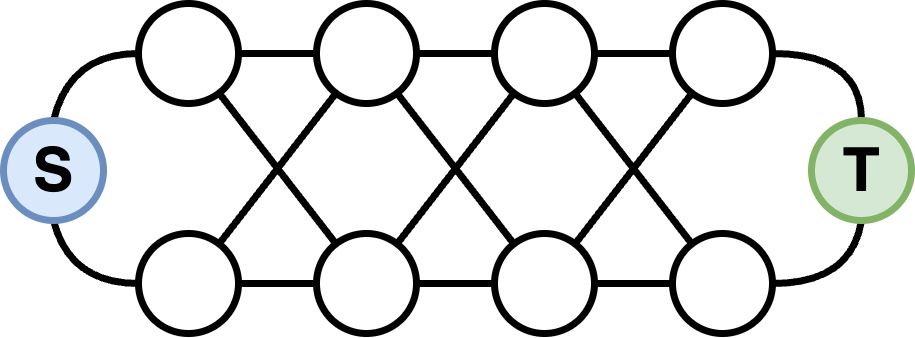}
\caption{Crossed-Ring}
\end{subfigure}
\caption{Line, ring, and crossed-ring graphs where the distance between source and target nodes is equal to 5. Nodes marked with ``S'' are source nodes, while the nodes with a ``T'' are target nodes.}
\label{fig:graph_transfer_example}
\end{figure}

\subsection{Graph Property Prediction}\label{app:datasets_graphproppred}
In our experiments on graph property prediction, we followed the data generation procedure outlined in \cite{gravina2022anti}. Graphs were randomly selected from various graph distributions, \ie Erd\H{o}s–R\'{e}nyi, Barabasi-Albert, grid, caveman, tree, ladder, line, star, caterpillar, and lobster. A visual examplification of each distribution is provided in \cref{fig:graph_distrib}. Each graph contained between 25 and 35 nodes, where each node is assigned with random identifiers as input features sampled from a uniform distribution in the interval $[0, 1)$. The target values represented single-source shortest paths, node eccentricity, and graph diameter. The dataset comprised a total of 7040 graphs, with 5120 used for training, 640 for validation, and 1280 for testing.

\begin{figure}[h]
\centering
\begin{subfigure}{0.2\textwidth}
    \includegraphics[width=1.1\textwidth]{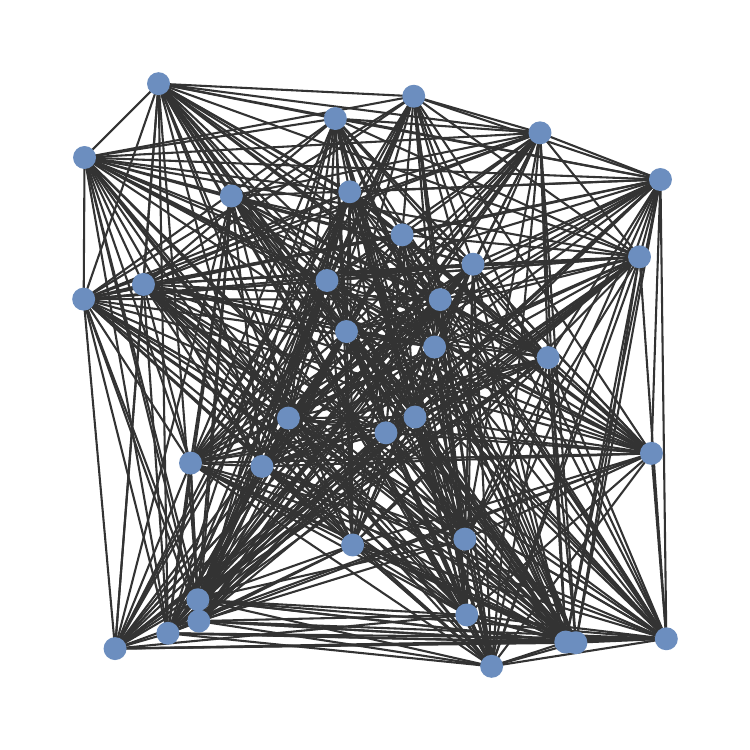}
    \caption{Erd\H{o}s–R\'{e}nyi}
\end{subfigure}
\begin{subfigure}{0.2\textwidth}
    \includegraphics[width=1.1\textwidth]{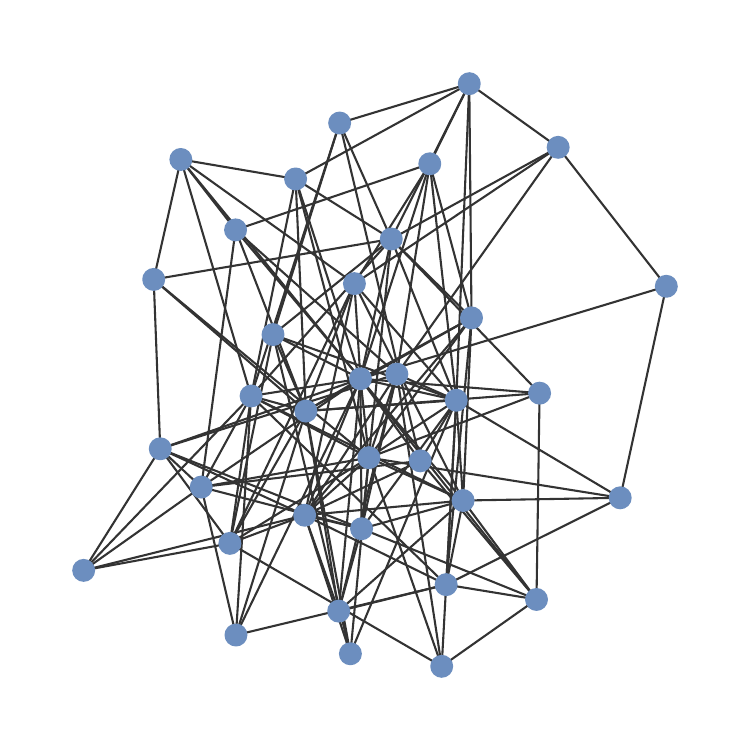}
    \caption{Barabasi-Albert}
\end{subfigure}
\begin{subfigure}{0.2\textwidth}
    \includegraphics[width=1.1\textwidth]{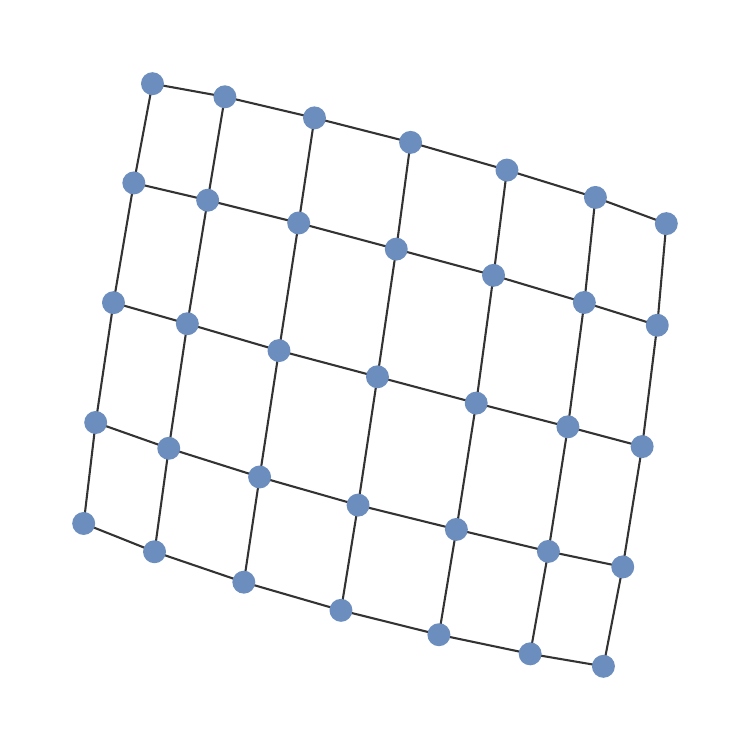}
    \caption{Grid}
\end{subfigure}
\begin{subfigure}{0.2\textwidth}
    \includegraphics[width=1.1\textwidth]{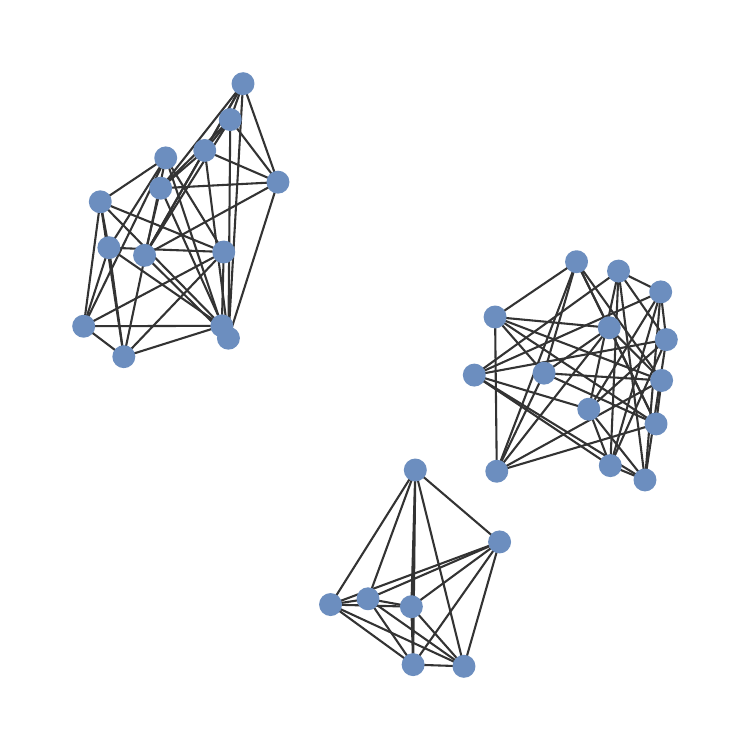}
    \caption{Caveman}
\end{subfigure}
\begin{subfigure}{0.2\textwidth}
    \includegraphics[width=1.1\textwidth]{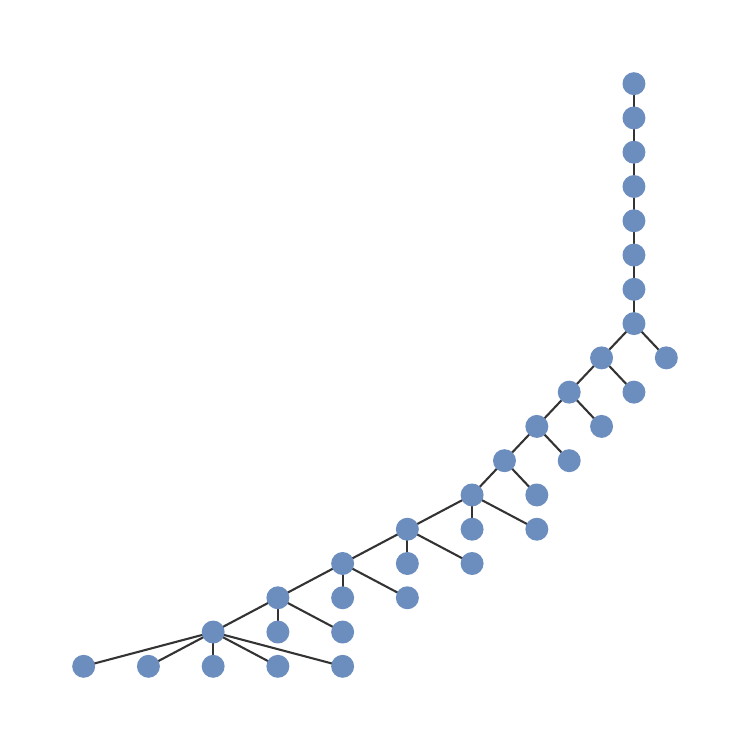}
    \caption{Tree}
\end{subfigure}
\begin{subfigure}{0.2\textwidth}
    \includegraphics[width=1.1\textwidth]{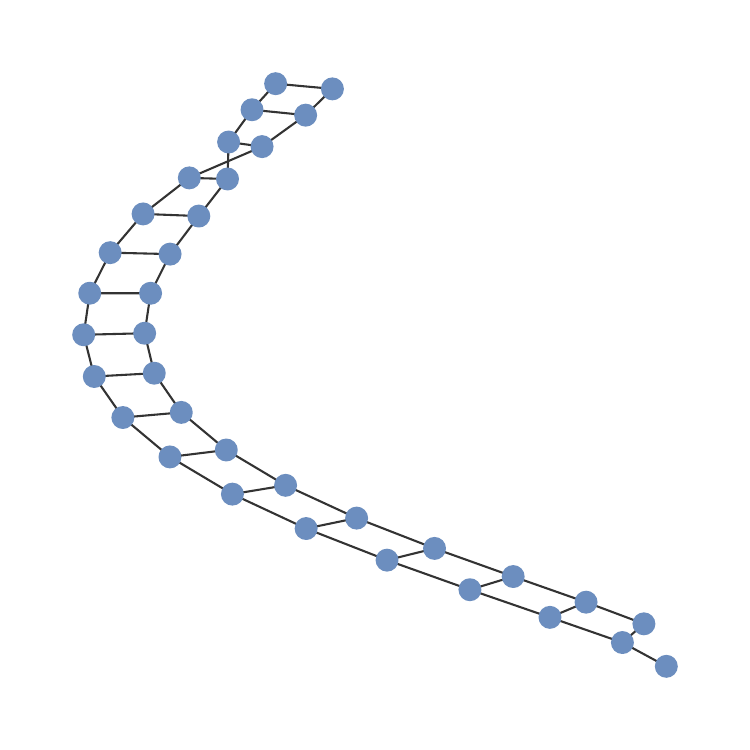}
    \caption{Ladder}
\end{subfigure}
\begin{subfigure}{0.2\textwidth}
    \includegraphics[width=1.1\textwidth]{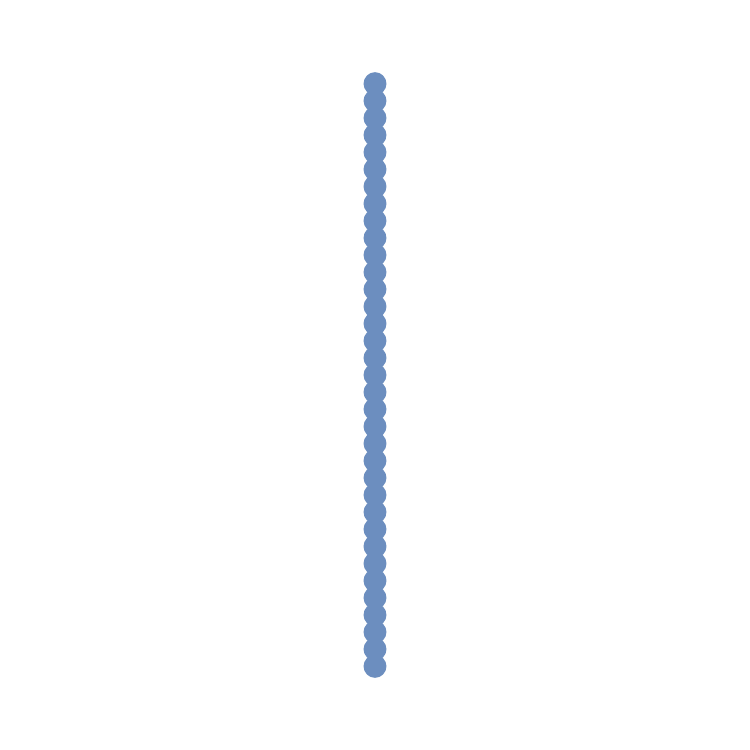}
    \caption{Line}
\end{subfigure}
\begin{subfigure}{0.2\textwidth}
    \includegraphics[width=1.1\textwidth]{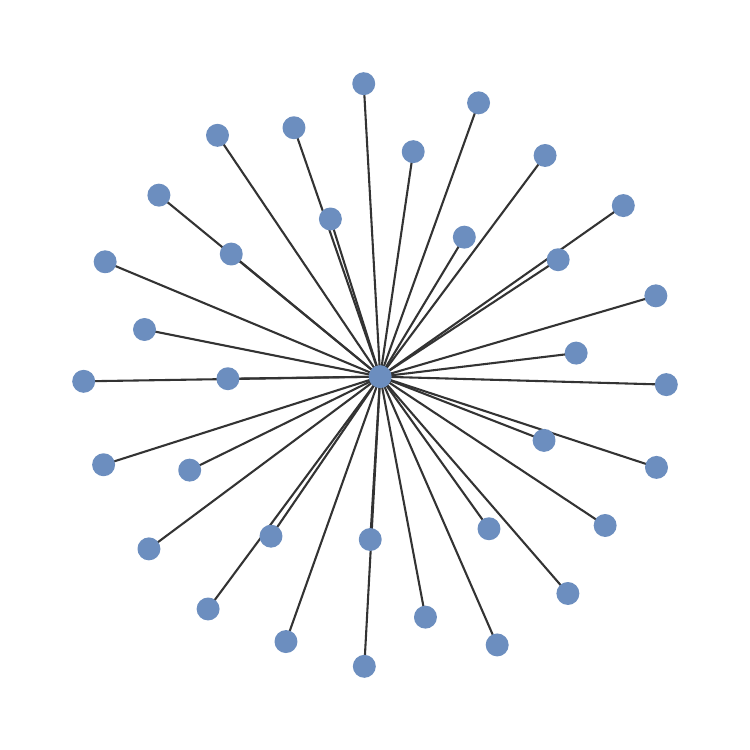}
    \caption{Star}
\end{subfigure}
\begin{subfigure}{0.2\textwidth}
    \includegraphics[width=1.1\textwidth]{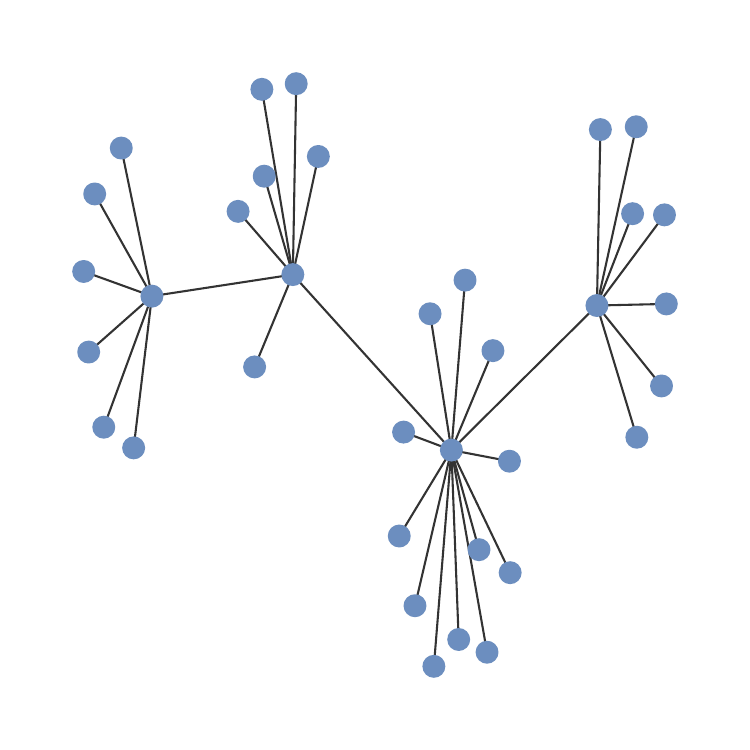}
    \caption{Caterpillar}
\end{subfigure}
\begin{subfigure}{0.2\textwidth}
    \includegraphics[width=1.1\textwidth]{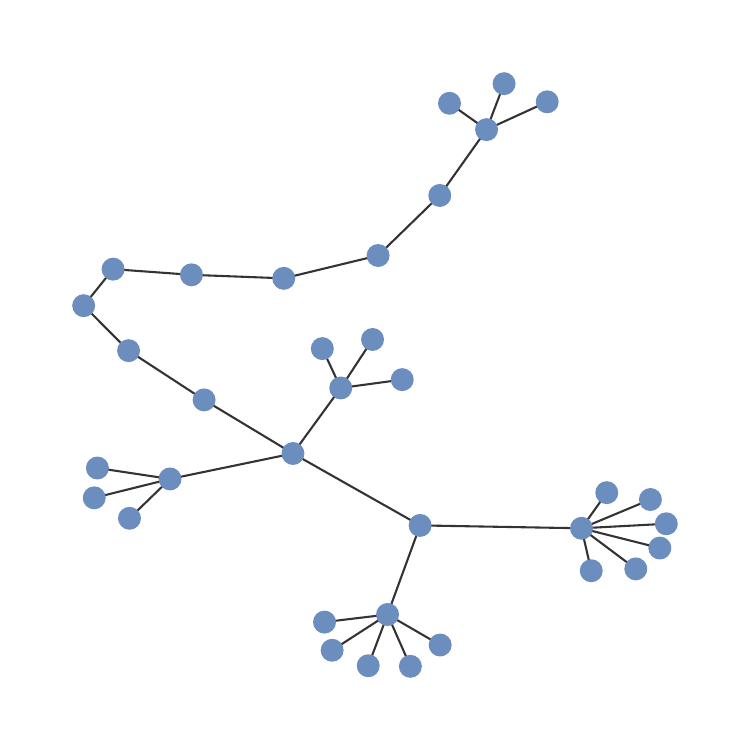}
    \caption{Lobster}
\end{subfigure}
\caption{Erd\H{o}s–R\'{e}nyi, Barabasi-Albert, grid, caveman, tree, ladder, line, star, caterpillar, and lobster graphs where the number of nodes is equal to 35.}
\label{fig:graph_distrib}
\end{figure}

\subsection{Long Range Graph Benchmark}\label{app:datasets_lrgb}
In the Long Range Graph Benchmark section, we considered the ``Peptides-func'', ``Peptides-struct'', and ``PascalVOC-sp'' datasets~\cite{dwivedi2022LRGB}. In the first two datasets, the graphs correspond to 1D amino acid chains, and they are derived such that the nodes correspond to the heavy (non-hydrogen) atoms of the peptides while the edges represent the bonds between them. Peptides-func is a multi-label graph classification dataset, with a total of 10 classes based on the peptide function, e.g., Antibacterial, Antiviral, cell-cell communication, and others. Peptides-struct is a multi-label graph regression dataset based on the 3D structure of the peptides. Specifically, the task consists of the prediction of the inertia of the molecules using the mass and valence of the atoms, the maximum distance between each atom-pairs, sphericity, and the average distance of all heavy atoms from the plane of best fit. Both Peptides-func and Peptides-struct consist of 15,535 graphs with a total of 2.3 million nodes. 
PascalVOC-sp is a node classification dataset composed of graphs created from the images in the Pascal VOC 2011 dataset~\citep{everingham2015pascal}. 
A graph is derived from each image by extracting superpixel nodes using the SLIC algorithm~\citep{achanta2012slic} and constructing a rag-boundary graph to interconnect these nodes.
Each node represents a region of an image that belongs to a specific class. The dataset consists of 11,355 graphs for a total of 5.4 million nodes. The task involves predicting the semantic segmentation label for each superpixel node across 21 different classes.

We applied stratified splitting to Peptides-func and Peptides-struct to generate balanced train–valid–test dataset splits, using the ratio of 70\%–15\%–15\%, respectively.
On PascalVOC-sp, we consider 8,498 graphs for training, 1,428 for validation, and 1,429 test.

\section{Experimental Details}
\label{app:experimental_details}
Our method is implemented in PyTorch. Our experimental results were obtained using NVIDIA RTX-3090 and A100  GPUs.
\subsection{SWAN versions}\label{app:versions}
In our experiments, we explore four variants of our SWAN method, depending on the implementation of $\Phi$ and $\Psi$, as reported in Table~\ref{tab:swan_versions}. These versions can be grouped into two main groups, which differ in the resulting non-dissipative behavior. In the first group, which contains SWAN and SWAN-\textsc{learn} both $\Phi$ and $\Psi$ lead to purely imaginary eigenvalues, since we use symmetric graph shift operators and antisymmetric weight matrices, thus allowing for a node- and graph-wise non-dissipative behavior. The last group, which includes SWAN-\textsc{ne} and SWAN-\textsc{learn-ne}, can deviate from being globally non-dissipative. Indeed, if the weight matrix, $\bfV$, is an arbitrary matrix, then the eigenvalues of the Jacobian matrix of the system, $\mathbf{J}(t)$, are contained in a neighborhood of the imaginary axis with radius $r \leq ||\mathbf{V}||$ (Bauer-Fike’s theorem~\cite{bauer1960norms}). Although this result does not guarantee that the eigenvalues of the Jacobian are imaginary, in practice, it crucially limits their position, limiting the dynamics of the system on the graph to show, at most, moderate amplification or loss of signals through the graph.

\begin{table}[h]
\centering
\footnotesize
\begin{tabular}{lll}
\hline\toprule
\textbf{Name} & $\Phi$ & $\Psi$\\\midrule

\textbf{Weight Antisymmetry Only} \\
$\,$ SWAN$_{\beta=0}$ & $(\hat{\bfA}+\hat{\bfA}^\top)\mathbf{X}(\mathbf{V}-\mathbf{V}^\top)$ & -- \\
\midrule
\textbf{Bounded Non-Dissipative}  \\
$\,$ SWAN-\textsc{ne} & $\hat{\bfA}\bfX\bfV$ & $(\tilde{\bfA}-\tilde{\bfA}^\top)\mathbf{X}(\mathbf{Z}+\mathbf{Z}^\top)$ \\
$\,$ SWAN-\textsc{learn-ne} & $\hat{\bfA}_\bfF\mathbf{X}\mathbf{V}$ & $(\mathbf{\tilde{A}}_\bfF-\mathbf{\tilde{A}}_\bfF^\top)\mathbf{X}(\mathbf{Z}+\mathbf{Z}^\top)$ \\

\midrule
\textbf{Global and Local Non-Dissipative}  \\ 

$\,$ SWAN & $(\hat{\bfA}+\hat{\bfA}^\top)\mathbf{X}(\mathbf{V}-\mathbf{V}^\top)$ & $(\tilde{\bfA}-\tilde{\bfA}^\top)\mathbf{X}(\mathbf{Z}+\mathbf{Z}^\top)$ \\
$\,$ SWAN-\textsc{learn} & $(\hat{\bfA}_\bfF+\hat{\bfA}_\bfF^\top)\mathbf{X}(\mathbf{V}-\mathbf{V}^\top)$ & $(\mathbf{\tilde{A}}_\bfF-\mathbf{\tilde{A}}_\bfF^\top)\mathbf{X}(\mathbf{Z}+\mathbf{Z}^\top)$ \\

\bottomrule\hline
\end{tabular}
\caption{The grid of the evaluated SWAN versions. We consider $\hat{\bfA}$ to be either the original adjacency matrix $\bfA$ or the symmetric normalized adjacency matrix, $\bfD^{-1/2}\bfA \bfD^{-1/2}$, while $\tilde{\bfA}$ is the random walk normalized adjacency matrix, $\bfD^{-1}\bfA$. The learned versions (-\textsc{learn}) employ learnable $\hat{\bfA}$ and $\tilde{\bfA}$ as described in Appendix~\ref{app:learningSpatial}, here referred as $\hat{\bfA}_\bfF$ and $\tilde{\bfA}_\bfF$.
\label{tab:swan_versions}
}
\end{table}

\subsection{Graph Transfer Task}
\label{app:graphTransfer_exp_details}

We design each model as a combination of three main components. The first is the encoder which maps the node input features into a latent hidden space; the second is the graph convolution (\ie SWAN or the other baselines); and the third is a readout that maps the output of the convolution into the output space. The encoder and the readout share the same architecture among all models in the experiments.

We perform hyperparameter tuning via grid search, optimizing the Mean Squared Error (MSE) computed on the node features of the whole graph. We train the models using Adam optimizer for a maximum of 2000 epochs and early stopping with patience of 100 epochs on the validation loss. For each model configuration, we perform 4 training runs with different weight initialization and report the average of the results.
We report in Appendix~\ref{app:hyperparams} the grid of hyperparameters exploited for this experiment.

\subsection{Graph Property Prediction}
\label{app:property_prediction_exp_details}

We employ the same datasets, hyperparameter space, and experimental setting presented in \cite{gravina2022anti}. Therefore, we perform hyperparameter tuning via grid search, optimizing the Mean Square Error (MSE), training the models using Adam optimizer for a maximum of 1500 epochs, and early stopping with patience of 100 epochs on the validation error. For each model configuration, we perform 4 training runs with different weight initialization and report the average of the results.
We report in Appendix~\ref{app:hyperparams} the grid of hyperparameters exploited for this experiment, while in Appendix~\ref{app:datasets} more details about the dataset.

\subsection{Long-Range Graph Benchmarks}
\label{app:lrgb_details}
We employ the same datasets and experimental setting presented in \cite{dwivedi2022LRGB}. Therefore, we perform hyperparameter tuning via grid search, optimizing the Average Precision (AP) in the Peptide-func task, the Mean Absolute Error (MAE) in the Peptide-struct task, and e macro weighted F1 score in PascalVOC-sp, training the models using AdamW optimizer for a maximum of 300 epochs. For each model configuration, we perform 3 training runs with different weight initialization and report the average of the results. Also, we follow the guidelines in \cite{dwivedi2022LRGB, drew} and stay within the 500K parameter budget.
We report in Appendix~\ref{app:hyperparams} the grid of hyperparameters exploited for this experiment, while in Appendix~\ref{app:datasets} more details about the dataset. 

\subsection{Hyperparameters}\label{app:hyperparams}
In Table~\ref{tab:hyperparams} we report the grids of hyperparameters employed in our experiments by each method. We recall that the hyperparameters $\epsilon$, $\gamma$, and $\hat{\bfA}$ refer to our method and ADGN, while $\beta$ only to our method. Moreover, we note that, in each graph transfer task, we use a number of layers that is
equal to the distance between the source and target nodes.
\begin{table}[h]
\centering

\footnotesize
\begin{tabular}{l|lll}
\hline\toprule
\multirow{2}{*}{\textbf{Hyperparameters}}  & \multicolumn{3}{c}{\textbf{Values}}\\\cmidrule{2-4}
& {\bf \emph{Transfer}} & {\bf \emph{GraphProp}} & {\bf \emph{LRGB}}\\\midrule
Optimizer       & Adam            & Adam                     & AdamW\\
Learning rate   & 0.001           & 0.003                    & 0.001, \ 0.0005\\
Weight decay    & 0              & $10^{-6}$                & 0, \ 0.0001\\
N. layers       & 3, 5, 10, 50    & 1, 5, 10, 20             & 5, \ 8, \ 16, \ 32\\
embedding dim   & 64              & 10, 20, 30               & 64, \ 128\\
$\hat{\bfA}$    & $\bfA$, $\bfD^{-1/2}\bfA \bfD^{-1/2}$ & $\bfA$, $\bfD^{-1/2}\bfA \bfD^{-1/2}$ & $\bfA$, $\bfD^{-1/2}\bfA \bfD^{-1/2}$\\ 
$\sigma$        & tanh            & tanh                     & tanh\\
$\epsilon$      & 0.5, 0.1        & 1, 0.1, 0.01             & 1, 0.01\\
$\gamma$        & 0.1             & 1, 0.1                   & 1, 0.1\\
$\beta$         & 1, 0.1, 0.01, -1           & 2, 1, 0.5, 0.1, -0.5, -1 & 1, 0.1\\
\bottomrule\hline
\end{tabular}
\caption{The grid of hyperparameters employed during model selection for the 
graph transfer tasks (\emph{Transfer}), graph property prediction tasks (\emph{GraphProp}), and Long Range Graph Benchmark (\emph{LRGB}). We observe that, in each graph transfer task, we use a number of layers that is equal to the distance between the source and target nodes.
 \label{tab:hyperparams}
}
\end{table}

\section{Complexity and Runtimes}
\label{app:complexity}
\textbf{Complexity Analysis.}
Our SWAN architecture remains within the computational complexity of MPNNs (e.g., \citet{kipf2016semi, xu2019how}) and other DE-GNNs such as GRAND \cite{chamberlain2021grand}. Specifically, each SWAN layer is linear in the number of nodes $|\cal{V}|$ and edges $|\cal{E}|$, therefore it has a time complexity of $\mathcal{O}(|\cal{V}|+|\cal{E}|)$. Assuming we compute $L$ steps of the ODE defined by SWAN, the overall complexity is $\mathcal{O} \left(L\cdot(|\cal{V}|+|\cal{E}|) \right)$.

\noindent \textbf{Runtimes.} We measure the training and inference (i.e., the epoch-time and test-set time) runtime of SWAN on the Peptides-struct dataset and compare it with other baselines. Our runtimes show that SWAN is able to obtain high performance, while retaining a linear complexity runtime that is parallel to other MPNNs and DE-GNNs. Our measurements are reported in Table \ref{tab:runtimes}, and presents the training and inference runtimes in seconds, as well as the obtained metric for that task, the mean-absolute-error (MAE), for reference. The runtimes were measured on an NVIDIA RTX-3090 GPU with 24GB of memory. In all time measurements, we use a batch size of 64, 128 feature channels, and 5 layers. For reference, we also provide the reported downstream task performance of each method.

\begin{table}[h]
    \centering
    \footnotesize
    
    \begin{tabular}{lcccc}
    \hline \toprule
         \textbf{Method} & \textbf{Training} & \textbf{Inference} & \textbf{MAE} {\small($\downarrow$)}  \\
         \midrule
         GCN &  2.90 & 0.32 &  0.3496$_{\pm0.0013}$  \\
         \midrule
         GraphGPS+LapPE & 23.04 & 2.39 &   0.2500$_{\pm0.0005}$ \\
         \midrule
         GraphCON & 3.03 & 0.27 & 0.2778$_{\pm0.0018}$\\
         ADGN & 2.83 & 0.25 & 0.2874$_{\pm0.0021}$\\
         \midrule
         SWAN & 2.88 & 0.24 & 0.2571$_{\pm0.0018}$\\
         SWAN-\textsc{learn} & 2.93 & 0.26  & 0.2485$_{\pm0.0009}$  \\
         \bottomrule \hline
    \end{tabular}
        \caption{Measured training and inference runtimes in seconds, and the obtained MAE of SWAN and other baselines on the Peptides-func dataset.\label{tab:runtimes}
        }
\end{table}

To further evaluate the scalability of our SWAN, we present in \cref{tab:additional_runtimes} results on larger datasets, \ie ogbn-arxiv \citep{hu2020ogb} (169,343 nodes; 1,166,243 edges), Questions \citep{platonov2023a} (48,921 nodes; 153,540 edges), Tolokers \citep{platonov2023a} (11,758 nodes; 519,000 edges). We report test accuracy and training/inference runtimes (ms) measured on an NVIDIA A100 with 40GB memory. Our results show that GraphGPS reaches out-of-memory (OOM) on larger datasets, while SWAN and GCN show similar runtimes, with SWAN outperforming GCN.

\begin{table}[h]
\centering
\footnotesize
    
\begin{tabular}{lccc|ccc|ccc}
\hline \toprule
&		\multicolumn{3}{c}{\textbf{ogbn-arxiv}} &			\multicolumn{3}{c}{\textbf{Questions}} & 			\multicolumn{3}{c}{\textbf{Tolokers}} \\		
\textbf{Method} &            \textbf{Training} & \textbf{Inference} & \textbf{Acc} ($\uparrow$) & \textbf{Training} & \textbf{Inference} & \textbf{Acc} ($\uparrow$) & 	\textbf{Training} & \textbf{Inference} & \textbf{Acc} ($\uparrow$)\\
         &   (ms)     & (ms)      &              & (ms)     &   (ms)    &                     & (ms)       &     (ms)    &   \\
\midrule
GCN               & 219.04  & 107.50    & 71.74$_{\pm2.90}$	& 86.04     &   41.22   &   76.09$_{\pm1.27}$	& 141.98	& 81.87	 & 83.64$_{\pm0.67}$	\\
GPS               & OOM     & OOM       & OOM	    & OOM       &   OOM	    &   OOM         & 1272.15	& 532.93 & 83.51$_{\pm0.93}$	\\\midrule
SWAN       & 228.45  & 116.06    & 72.98$_{\pm2.60}$ & 86.19     &   41.39   &   77.94$_{\pm1.16}$	& 142.60	& 82.09	 & 84.01$_{\pm0.54}$	\\
SWAN-\textsc{learn} & 255.84  & 131.48    & 73.26$_{\pm2.80}$ & 104.71    &   59.80   &   78.12$_{\pm1.28}$	& 167.04	& 97.98	 & 84.69$_{\pm0.63}$	\\
\bottomrule \hline
\end{tabular}
\caption{Measured training and inference runtimes (ms), and the obtained Accuracy of SWAN and other baselines on the ogbn-arxiv, Questions, and Tolokers datasets.
\label{tab:additional_runtimes}
}
\end{table}

\section{Additional Comparisons}
\label{app:additional_results}
\textbf{Graph Property Prediction.} Although we experimented SWAN with weight sharing to maintain consistency with DE-GNN literature \citep{chamberlain2021grand, rusch2022graph,eliasof2021pde,gravina2022anti}, a
more general version of the framework with layer-dependent weights, \ie $\bfW^{(\ell)} - (\bfW^{(\ell)})^\top$, $\bfV^{(\ell)}$, and $\bfZ^{(\ell)}$, is possible. We report in Table~\ref{tab:results_GraphProp_full} the comparison of SWAN and SWAN-\textsc{learn} with baseline methods employing both weight sharing and layer dependent weights configurations. Overall, the experiments demonstrate that while the weight-sharing version of SWAN already delivers consistent improvements over state-of-the-art methods, the introducing layer-dependent weights leads to further improvements, significantly outperforming all baselines. Lastly, we note that while there is occasional overlap in standard deviations between SWAN (employing weight sharing) and ADGN, (i) SWAN and SWAN-\textsc{learn} exhibit superior average performance, (ii) they significantly outperform ADGN on real-world long-range graph benchmarks (see \cref{sec:exp_lrb}), and (iii) the use of layer-dependent weights further amplifies the performance gap compared to ADGN.
\begin{table}[t]
\setlength{\tabcolsep}{1pt}
\centering
\footnotesize
\vspace{-2mm}
\begin{tabular}{lccc}
\hline\toprule
\textbf{Model} &\textbf{Diameter} & \textbf{SSSP} & \textbf{Eccentricity} \\\midrule
\textbf{MPNNs} \\
$\,$ GCN            & 0.7424$_{\pm0.0466}$ & 0.9499$_{\pm9.18\cdot10^{-5}}$ & 0.8468$_{\pm0.0028}$ \\
$\,$ GAT            & 0.8221$_{\pm0.0752}$ & 0.6951$_{\pm0.1499}$           & 0.7909$_{\pm0.0222}$  \\
$\,$ GraphSAGE      & 0.8645$_{\pm0.0401}$ & 0.2863$_{\pm0.1843}$           &  0.7863$_{\pm0.0207}$\\
$\,$ GIN            & 0.6131$_{\pm0.0990}$ & -0.5408$_{\pm0.4193}$          & 0.9504$_{\pm0.0007}$\\
$\,$  GCNII          & 0.5287$_{\pm0.0570}$ & -1.1329$_{\pm0.0135}$          & 0.7640$_{\pm0.0355}$\\
\midrule
\textbf{DE-GNNs} \\
$\,$ DGC            & 0.6028$_{\pm0.0050}$ & -0.1483$_{\pm0.0231}$          & 0.8261$_{\pm0.0032}$\\
$\,$ GRAND          & 0.6715$_{\pm0.0490}$ & -0.0942$_{\pm0.3897}$          & 0.6602$_{\pm0.1393}$ \\
$\,$ GraphCON       & 0.0964$_{\pm0.0620}$ & -1.3836$_{\pm0.0092}$ & 0.6833$_{\pm0.0074}$\\
$\,$ ADGN
& \three{-0.5188$_{\pm0.1812}$} & \three{-3.2417$_{\pm0.0751}$} & \three{0.4296$_{\pm0.1003}$}  \\

\midrule
\textbf{Ours - weight sharing} \\
SWAN & -0.5249$_{\pm0.0155}$ &  -3.2370$_{\pm0.0834}$ & 0.4094$_{\pm0.0764}$ \\
SWAN-\textsc{learn} & \two{-0.5981$_{\pm0.1145}$}  & -3.5425$_{\pm0.0830}$  & -0.0739$_{\pm0.2190}$ \\

\midrule
\textbf{Ours - layer dependent weights} \\
SWAN & \one{-0.6381$_{\pm0.0358}$} &  \one{-3.9342$_{\pm0.1993}$} & \one{-0.2706$_{\pm0.0948}$} \\
SWAN-\textsc{learn} & -0.5905$_{\pm0.0372}$  & \two{-3.8258$_{\pm0.0950}$} & \two{-0.2245$_{\pm0.0840}$} \\

\bottomrule\hline      
\end{tabular}
\caption{Mean test set {\small$log_{10}(\mathrm{MSE})$} and std averaged over 4 random weight initializations on the Graph Property Prediction tasks. The lower, the better. 
\one{First} and \two{second} best results and \three{best baseline} for each task are color-coded. 
\label{tab:results_GraphProp_full}
}
\end{table}

\noindent \textbf{Long Range Graph Benchmark.} In Table \ref{tab:lrgb_results} we are interested in directly comparing the performance of SWAN and its non-dissipative properties with other MPNNs and DE-GNNs, as well as the common approach of using graph transformer to address long-range interaction modeling. We now provide additional comparisons with other methods that utilize multi-hop information, and are therefore more computationally expensive than our SWAN, while also utilizing additional features such as the Laplacian positional encoding. In our evaluation of SWAN, we chose not to use additional feature enhancements, in order to provide a clear exposition of the contribution and importance of the local and global non-dissipativity offered by SWAN.  The additional comparisons are given in 
Table~\ref{tab:lrgb_results_appendix}. In the Peptide-struct task, SWAN demonstrates superior performance compared to all the compared methods. On the Peptide-func task,  SWAN outperforms MPNNs, Graph Transformers, DE-GNNs, and some multi-hop GNNs performance to other methods, while being second to multi-hop methods such as DRew. The PascalVOC-SP results show the effectiveness of SWAN with other DE-GNNs (GRAND, GraphCON, ADGN), which are the focus of our paper, while offering competitive results to multi-hop and transformer methods. However, it is essential to note that multi-hop GNNs incur higher complexity, while SWAN maintains a linear complexity. Therefore, we conclude that SWAN offers a highly effective approach for tasks that require long-range interactions, as in the LRGB benchmark.
\begin{table}[h]
\centering
\footnotesize
\setlength{\tabcolsep}{3.5pt}
\begin{tabular}{@{}lccc@{}}
\hline\toprule
\multirow{3}{*}{\textbf{Model}} & \textbf{Peptides-}  & \textbf{Peptides-}  & \textbf{Pascal}              \\
& \textbf{func} & \textbf{struct} & \textbf{voc-sp}
                               \\
                                & \scriptsize{AP $\uparrow$}                             & \scriptsize{MAE $\downarrow$} & \scriptsize{F1 $\uparrow$}                             \\ \midrule  
\textbf{MPNNs} \\
$\,$ GCN           & 59.30$_{\pm0.23}$ & 0.3496$_{\pm0.0013}$ & 12.68$_{\pm0.60}$\\
$\,$ GINE          & 54.98$_{\pm0.79}$ & 0.3547$_{\pm0.0045}$ & 12.65$_{\pm0.76}$\\
$\,$ GCNII         & 55.43$_{\pm0.78}$ & 0.3471$_{\pm0.0010}$ & 16.98$_{\pm0.80}$\\
$\,$ GatedGCN      & 58.64$_{\pm0.77}$ & 0.3420$_{\pm0.0013}$ & 28.73$_{\pm2.19}$\\
$\,$ GatedGCN+PE   & 60.69$_{\pm0.35}$ & 0.3357$_{\pm0.0006}$ & 28.60$_{\pm0.85}$\\ 
\midrule
\textbf{Multi-hop GNNs}\\
$\,$ DIGL+MPNN           & 64.69$_{\pm0.19}$         & 0.3173$_{\pm0.0007}$ & 28.24$_{\pm0.39}$\\
$\,$ DIGL+MPNN+LapPE     & 68.30$_{\pm0.26}$         & 0.2616$_{\pm0.0018}$ & 29.21$_{\pm0.38}$\\
$\,$ MixHop-GCN          & 65.92$_{\pm0.36}$         & 0.2921$_{\pm0.0023}$ & 25.06$_{\pm1.33}$\\
$\,$ MixHop-GCN+LapPE    & 68.43$_{\pm0.49}$         & 0.2614$_{\pm0.0023}$ & 22.18$_{\pm1.74}$\\ 
$\,$ DRew-GCN            & \three{69.96$_{\pm0.76}$} & 0.2781$_{\pm0.0028}$ & 18.48$_{\pm1.07}$\\
$\,$ DRew-GCN+LapPE      & \one{71.50$_{\pm0.44}$}   & 0.2536$_{\pm0.0015}$ & 18.51$_{\pm0.92}$\\
$\,$ DRew-GIN            & 69.40$_{\pm0.74}$         & 0.2799$_{\pm0.0016}$ & 27.19$_{\pm0.43}$\\
$\,$ DRew-GIN+LapPE      & \two{71.26$_{\pm0.45}$}   & 0.2606$_{\pm0.0014}$ & 26.92$_{\pm0.59}$\\
$\,$ DRew-GatedGCN       & 67.33$_{\pm0.94}$         & 0.2699$_{\pm0.0018}$ & 32.14$_{\pm0.21}$\\
$\,$ DRew-GatedGCN+LapPE & 69.77$_{\pm0.26}$         & 0.2539$_{\pm0.0007}$ & \two{33.14$_{\pm0.24}$}\\


\midrule
\textbf{Transformers} \\
$\,$ Transformer+LapPE & 63.26$_{\pm1.26}$ & \three{0.2529$_{\pm0.0016}$} & 26.94$_{\pm0.98}$\\
$\,$ SAN+LapPE         & 63.84$_{\pm1.21}$ & 0.2683$_{\pm0.0043}$         & \three{32.30$_{\pm0.39}$}\\
$\,$ GraphGPS+LapPE    & 65.35$_{\pm0.41}$ & \two{0.2500$_{\pm0.0005}$}   & \one{37.48$_{\pm1.09}$}\\ 
\midrule
\textbf{DE-GNNs} \\
$\,$ GRAND     & 57.89$_{\pm0.62}$ & 0.3418$_{\pm0.0015}$ &  19.18$_{\pm0.97}$ \\
$\,$  GraphCON & 60.22$_{\pm0.68}$ & 0.2778$_{\pm0.0018}$ &  21.08$_{\pm0.91}$ \\
$\,$ ADGN      & 59.75$_{\pm0.44}$ & 0.2874$_{\pm0.0021}$ &  23.49$_{\pm0.54}$\\ 
\midrule
\textbf{Ours} \\    
$\,$ SWAN                & 63.13$_{\pm0.46}$ & 0.2571$_{\pm0.0018}$       & 27.96$_{\pm0.48}$\\
$\,$ SWAN-\textsc{learn} & 67.51$_{\pm0.39}$ & \one{0.2485$_{\pm0.0009}$} & 31.92$_{\pm2.50}$\\

\bottomrule\hline
\end{tabular}
\caption{Performance of various classical, multi-hop and static rewiring MPNN, graph Transformer benchmarks, DE-GNNs, and our SWAN across two LRGB tasks. Results are averaged over 3 weight initializations. The \one{first}, \two{second}, and \three{third} best results for each task are color-coded. Beseline results are reported from \cite{drew}.
\label{tab:lrgb_results_appendix}
}
\end{table}

\section{Derivation of the Graph-wise Jacobian}
\label{app:jacobian}
Recall the ODE that defines SWAN  in \cref{eq:new_adgn_graphwise}:
\begin{small}
\begin{equation}
 \label{eq:new_adgn_graphwise_APP}
     \frac{\partial\mathbf{X}(t)}{\partial t} = \sigma \Bigl(\mathbf{X}(t){(\mathbf{W}-\mathbf{W}^\top)} 
     +{(\hat{\mathbf{A}}+\hat{\mathbf{A}}^\top)\mathbf{X}(t)(\mathbf{V}-\mathbf{V}^\top)}
     + \beta{(\tilde{\mathbf{{A}}}-\tilde{\mathbf{{A}}}^\top)\mathbf{X}(t)(\mathbf{Z}+\mathbf{Z}^\top)}\Bigr).
 \end{equation}
\end{small}

 The Jacobian of \cref{eq:new_adgn_graphwise_APP} with respect to $\mathbf{X}(t)$ is:
 \begin{small}
 \begin{align}
 \mathbf{M}_1 &= \sigma'\left(\mathbf{X}(t){(\mathbf{W}-\mathbf{W}^\top)} 
     +{(\hat{\mathbf{A}}+\hat{\mathbf{A}}^\top)\mathbf{X}(t)(\mathbf{V}-\mathbf{V}^\top)}
     + \beta{(\tilde{\mathbf{{A}}}-\tilde{\mathbf{{A}}}^\top)\mathbf{X}(t)(\mathbf{Z}+\mathbf{Z}^\top)} \right)\\
 \mathbf{M}_2 &=
     {(\mathbf{W}-\mathbf{W}^\top)}  
     +{(\mathbf{V}-\mathbf{V}^\top)^\top \otimes (\hat{\mathbf{A}}+\hat{\mathbf{A}}^\top) }
     + \beta(\mathbf{Z}+\mathbf{Z}^\top)^\top \otimes  (\tilde{\mathbf{{A}}}-\tilde{\mathbf{{A}}}^\top)
 \end{align}
 \end{small}
 where $\otimes$ is the Kronecker product.
 To analyze $\textbf{M}_1$, we use the following identity for arbitrary matrices $\mathbf{A}, \mathbf{X}, \mathbf{B}$ with appropriate dimensions (i.e., not related to the notations of this paper): 
 \begin{small}
 \begin{equation}    
 \label{eq:identity}
 \rm{vec}(\mathbf{AXB}) = \left( \mathbf{B}^\top \otimes \mathbf{A} \right) {\rm{vec}}(\mathbf{X})
 \end{equation}
 \end{small}
 Using the identity from \cref{eq:identity}, we can rewrite $\mathbf{M}_1$ as follows:
 \begin{small}
 \begin{align}
\nonumber \mathbf{M}_1 &= \sigma'\Bigl(\mathbf{I}\mathbf{X} (\mathbf{W}-\mathbf{W}^\top) +  (\hat{\mathbf{A}}+\hat{\mathbf{A}}^\top)\mathbf{X}(t)(\mathbf{V}-\mathbf{V}^\top)  +\beta(\tilde{\mathbf{{A}}}-\tilde{\mathbf{{A}}}^\top)\mathbf{X}(t)(\mathbf{Z}+\mathbf{Z}^\top) \Bigr) \\
\nonumber & ={\rm{diag}}\left(\rm{vec}(\sigma'\Bigl(\mathbf{I}\mathbf{X} (\mathbf{W}-\mathbf{W}^\top) +  (\hat{\mathbf{A}}+\hat{\mathbf{A}}^\top)\mathbf{X}(t)(\mathbf{V}-\mathbf{V}^\top)  +\beta(\tilde{\mathbf{{A}}}-\tilde{\mathbf{{A}}}^\top)\mathbf{X}(t)(\mathbf{Z}+\mathbf{Z}^\top) ))\right) \\
\nonumber &={\rm{diag}}\Bigl(\sigma'\Bigl(((\mathbf{W}-\mathbf{W}^\top)^\top \otimes \mathbf{I)} \rm{vec}(\mathbf{X})  +\\\nonumber&\hspace{3cm}+ ((\mathbf{V}-\mathbf{V}^\top)^\top \otimes (\hat{\mathbf{A}}+\hat{\mathbf{A}}^\top)) \rm{vec}(\mathbf{X}(t)) +\\&\hspace{3cm}+\beta ((\mathbf{Z}+\mathbf{Z}^\top)^\top \otimes (\tilde{\mathbf{{A}}}-\tilde{\mathbf{{A}}}^\top)) \rm{vec}(\mathbf{X}(t)) \Bigr)\Bigr),
 \end{align}
 \end{small}
where $\mathbf{I}$ is the identity matrix. Therefore $\mathbf{M}_1$ is a diagonal matrix.

We note that Equation~\ref{eq:new_adgn_graphwise_APP} is the result of the composite function $\sigma(g(\mathbf{X}(t)))$, where $\sigma$ is the activation function and $g(\mathbf{X}(t))=\mathbf{X}(t){(\mathbf{W}-\mathbf{W}^\top)} +{(\hat{\mathbf{A}}+\hat{\mathbf{A}}^\top)\mathbf{X}(t)(\mathbf{V}-\mathbf{V}^\top)} + \beta{(\tilde{\mathbf{{A}}}-\tilde{\mathbf{{A}}}^\top)\mathbf{X}(t)(\mathbf{Z}+\mathbf{Z}^\top)}$.

Therefore, $\textbf{M}_2$ results from the derivative of $g$ with respect to $\mathbf{X}(t)$. Considering the $\rm{vec}$ operator, we have
\begin{small}
\begin{align}
\nonumber\textbf{M}_2 &= \rm{vec}(g'(\mathbf{X}(t)))\\
\nonumber&= g'(\rm{vec}(\mathbf{X}(t)))\\
&=
     {(\mathbf{W}-\mathbf{W}^\top)}  
     +{(\mathbf{V}-\mathbf{V}^\top)^\top \otimes (\hat{\mathbf{A}}+\hat{\mathbf{A}}^\top) }
     + \beta(\mathbf{Z}+\mathbf{Z}^\top)^\top \otimes  (\tilde{\mathbf{{A}}}-\tilde{\mathbf{{A}}}^\top)
\end{align}
\end{small}

\section{The Stability of the Jacobian}\label{appendix:jacobian_stable}
As discussed in Section \ref{sec:math}, assuming that the Jacobian of the underlying system does not change significantly over time allows us to analyze the system from an autonomous system perspective~\cite{AscherPetzoldODEs} 
and mirrors prior approaches \cite{RuthottoHaber2018, chen2018neural,chang2018antisymmetricrnn, chamberlain2021grand, gravina2022anti}. In addition to building on existing literature, below, we provide an empirical measurement on a real-world dataset (peptides-func) of the Jacobian of our SWAN over time (layers). For reference, we compare it with the Jacobian of GCN. As can be seen from Figure \ref{fig:jacobian_over_time}, the Jacobian of SWAN has a minimal Jacobian change over time with an average of 0.6\% between layers, while the change in the Jacobian over time in GCN is 40\% on average.

\begin{figure}[h]
\centering
\begin{adjustbox}{angle=90}
\begin{minipage}{.001\textwidth}
\tiny
    \begin{equation*}
\hspace{-3mm}
\frac{|\mathbf{J}^{(\ell)}-\mathbf{J}^{(\ell-1)}|}{|\mathbf{J}^{(\ell-1)}|}
    \end{equation*}
  \end{minipage}%
    \end{adjustbox}
\begin{minipage}{.42\textwidth}
    \centering
    \includegraphics[width=\textwidth]{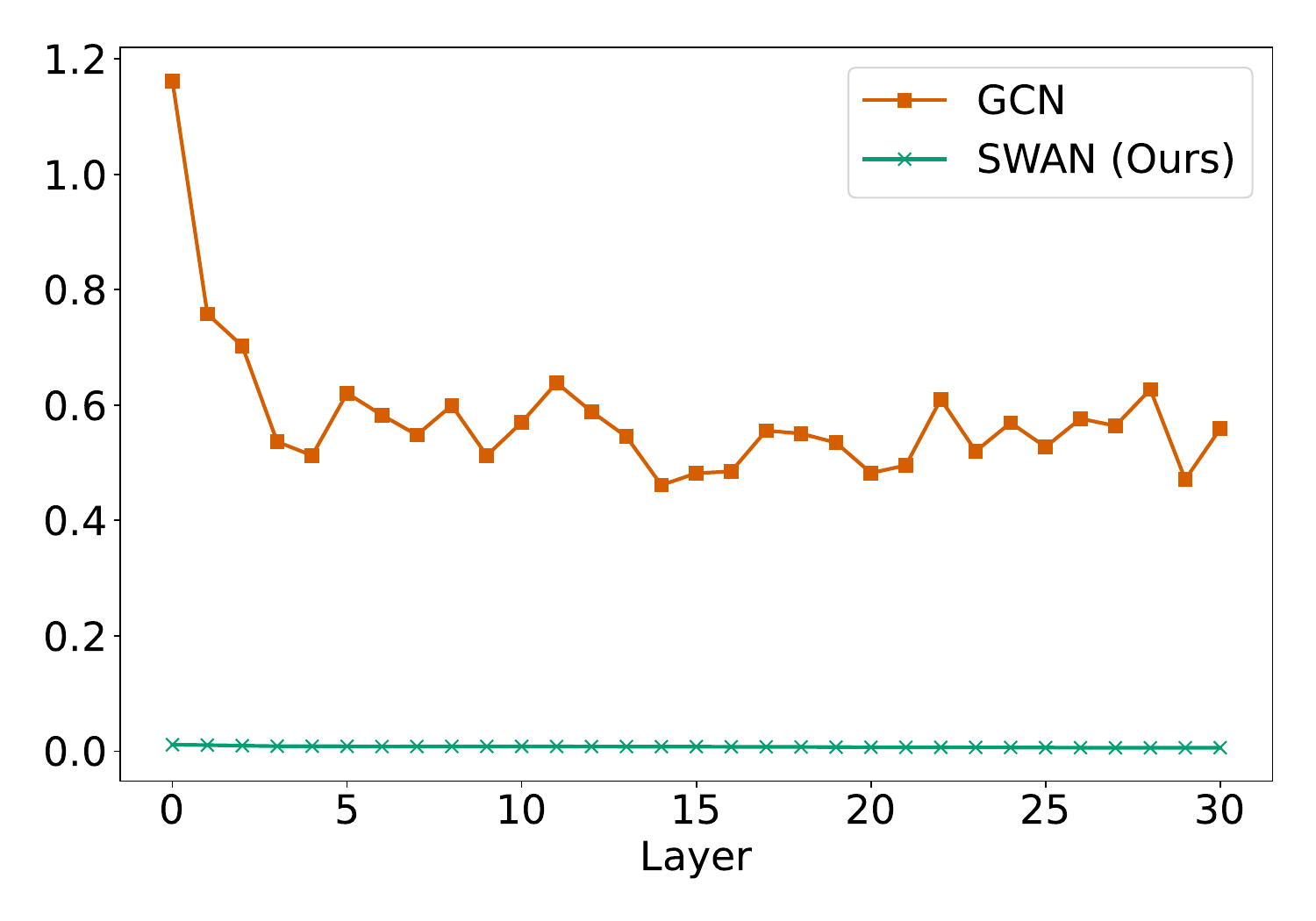}
    \caption{Arithmetic scale}
\end{minipage}
\begin{adjustbox}{angle=90}
\begin{minipage}{.001\textwidth}
    \tiny
    \begin{equation*}
    \hspace{-3mm}
\frac{|\mathbf{J}^{(\ell)}-\mathbf{J}^{(\ell-1)}|}{|\mathbf{J}^{(\ell-1)}|}
    \end{equation*}
  \end{minipage}%
\end{adjustbox}
\begin{minipage}{.42\textwidth}
    \centering
    \includegraphics[width=\textwidth]{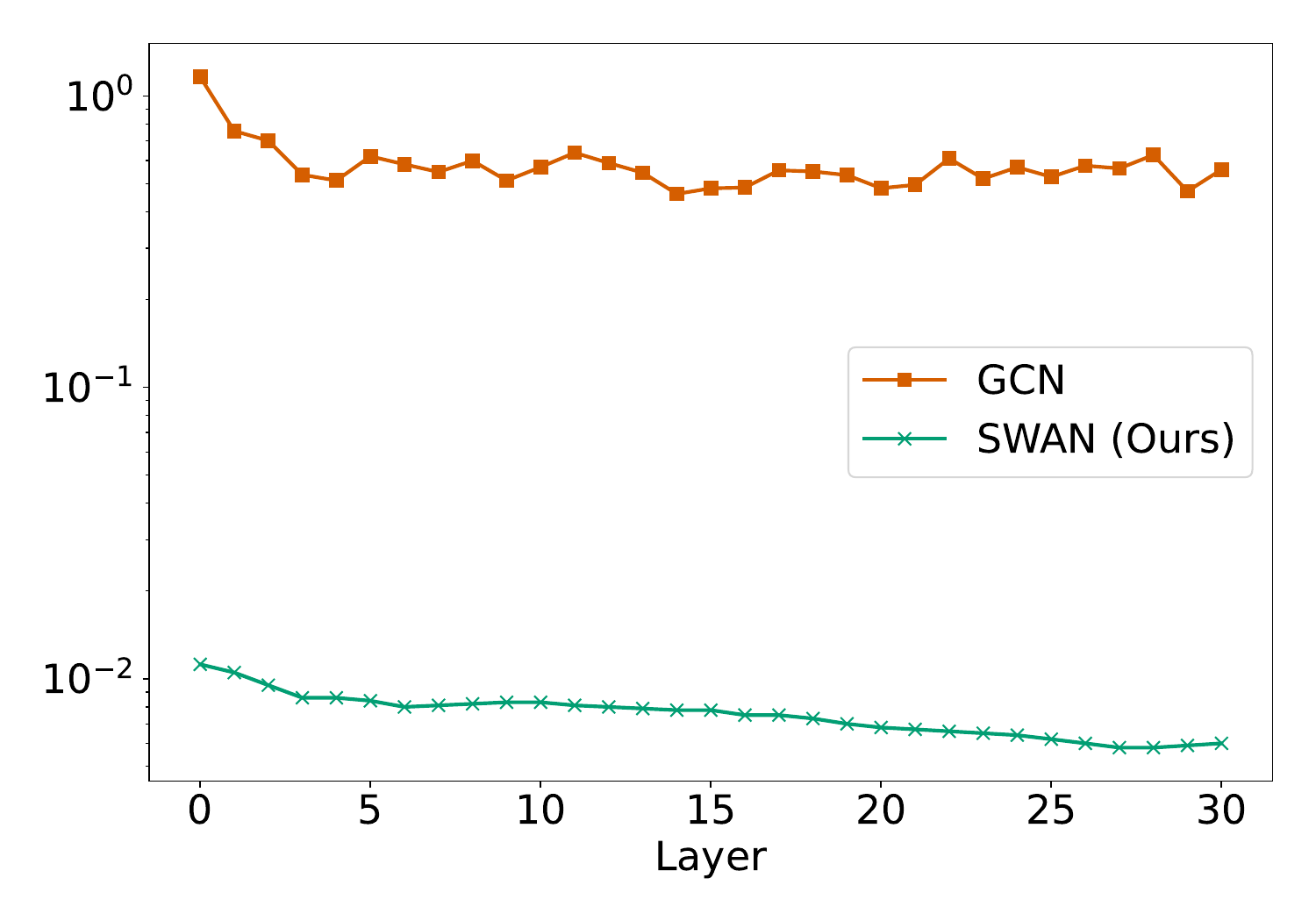}
    \caption{Logarithmic scale}
\end{minipage}
\caption{The relative change in Jacobian of GCN and our SWAN over layers, measured on the peptides-func data.}
\label{fig:jacobian_over_time}
\end{figure}

\section{Limitations and Broader Impact}\label{app:limits_and_impacts}
This paper aims to contribute to the field of Machine Learning, specifically focusing on advancing Differential-Equation inspired Graph Neural Networks (DE-GNNs). The research presented herein has a positive impact on the ongoing exploration and applications of GNNs, with an emphasis on applications that require long-range propagation or suffer from oversquashing. Potentially, it might be used in other aspects of GNNs.

Although we believe that our work has a positive impact, it is important to note that SWAN, as our title suggests, aims to propose a mechanism to address the oversquashing problem in GNNs through the perspective of DE-GNNs, and it is therefore focused on problems that require long interaction modeling. Also, SWAN provides an understanding of the model and its expected behavior, while keeping low computational costs (as shown and analyzed in Appendix~\ref{app:complexity}), and also offers significant improvement in its class, as discussed in the paper. However, if one aims to achieve the highest downstream performance, regardless of the computational complexity and the theoretical understanding, other solutions, such as higher-order GNNs or transformers, may also be viable and achieve higher performance at times.

\end{document}